%% file: acl_latex.tex
\documentclass[11pt]{article}
\usepackage{acl}
\usepackage{times}
\usepackage{latexsym}
\usepackage[T1]{fontenc}
\usepackage[utf8]{inputenc}
\usepackage{amsthm}
\theoremstyle{definition}

\usepackage[
  protrusion=true,
  expansion=false,
  tracking=false,
  kerning=false,
  spacing=false
]{microtype}
\usepackage{booktabs}
\usepackage{graphicx}
\usepackage{booktabs}
\usepackage{multirow}
\usepackage{tabularx}
\usepackage{array}
\usepackage{makecell}
\usepackage{adjustbox}
\usepackage{amsmath,amsfonts,amssymb,amsthm}
\usepackage{stfloats}
\usepackage[table]{xcolor}
\usepackage[most]{tcolorbox}
\tcbuselibrary{skins}
\usepackage{enumitem}
\usepackage{placeins}
\usepackage{float}
\usepackage{cuted}
\usepackage{caption}
\usepackage{listings}
\usepackage{algorithm}
\usepackage{algpseudocode}
\usepackage{calc}
\usepackage{kotex}
\usepackage{amsthm}
\newtheorem{theorem}{Theorem}
\newtheorem{definition}{Definition}
\usepackage{graphicx}
\usepackage[table]{xcolor}
\definecolor{boxbg}{RGB}{240,240,240}
\definecolor{softpurple}{HTML}{EFE9FF}
\definecolor{softgreen}{HTML}{E8F8EA}
\definecolor{scoregreen}{HTML}{2E9E45}
\definecolor{softred}{RGB}{255,90,90}
\definecolor{softblue}{RGB}{31,111,190}

\newcommand{\methodname}{DeRO}

\title{Negation without Confusion: Decomposing Bias and Recomposing Style \\ in Text Embeddings for Negation-aware Image Generation}

\author{Sooyeon Park$^{\spadesuit}$ \quad
Hemraj Singh$^{\spadesuit}$ \quad
Kyunghyun Min$^{\spadesuit}$ \quad
Youngjae Yu$^{\diamond}$ \quad
Sung-Bae Cho$^{\spadesuit}$ \\[0.4em]
$^{\spadesuit}$ Yonsei University \qquad
$^{\diamond}$ Seoul National University \\
\texttt{\{dreamyou070, shemraj07, khmin1104, sbcho\}@yonsei.ac.kr} \\
\qquad
\texttt{youngjaeyu@snu.ac.kr}
}

\definecolor{boxframe}{RGB}{0,0,0}
\begin{document}
\maketitle
\input{section/1_main}

\bibliography{custom}
\appendix
\input{section/2_supplementary}
\end{document}

%% file: section/1_main.tex
\begin{abstract}
Negation-aware image generation remains challenging for text-to-image diffusion models, which often struggle with negation due to implicit linguistic biases that are deeply entangled within text representations. As a result, bias-aligned semantics frequently dominate over intended negative constraints. To overcome this limitation, we propose a simple training-free method of \textbf{De}compose and \textbf{R}ec\textbf{O}mpose text embedding (\methodname{}), which constructs a semantic-preserving subspace and a bias subspace using auxiliary prompts and performs semantic disentanglement based on bidirectional SVD. By decomposing entangled representations and selectively recomposing disentangled semantic components, \methodname{} realigns text representations toward intended negation semantics while restoring residual stylistic features to preserve semantic fidelity and image quality. As a model-agnostic inference-time framework operating solely on text representations, \methodname{} can be seamlessly applied to diverse diffusion architectures, including both U-Net and DiT based models. Extensive experiments on both object- and concept-level negation demonstrate consistent improvements in negation-aware image generation, achieving stronger text-image alignment, better semantic fidelity, and higher human preference scores.
\end{abstract}

\begin{figure}[h]
\centering
\includegraphics[width=\linewidth]{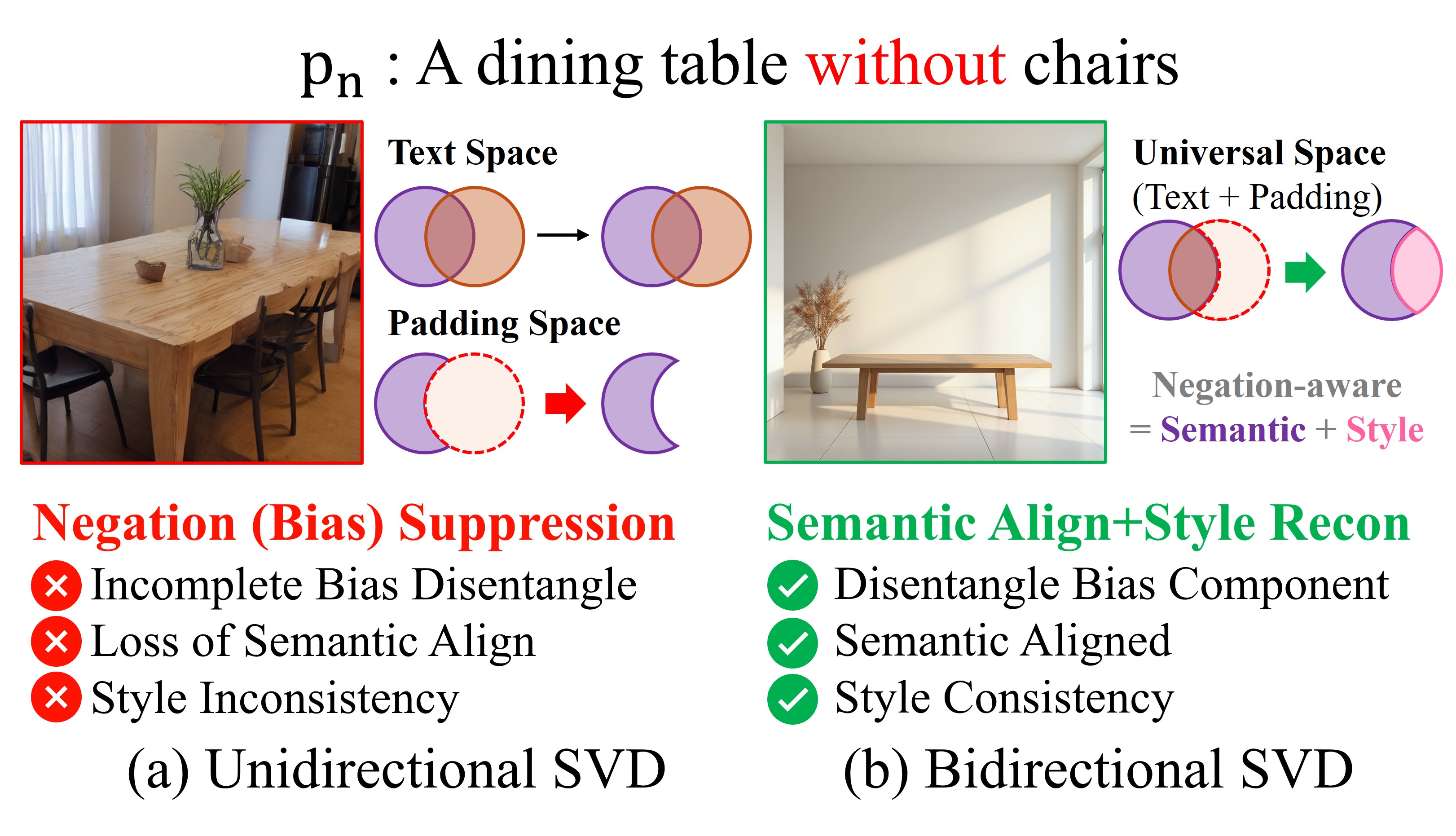}
\caption{(a) Existing \textbf{unidirectional SVD} methods (e.g., SupEOT~\cite{li2023get}) suppress only bias or negation directions within padding representations in order to remove undesired information while preserving the original prompt style. However, due to the strong entanglement among semantic, stylistic, and bias components, residual biased semantics often remain in the text embedding, resulting in incomplete negation or degraded stylistic consistency. (b) In contrast, \textbf{bidirectional SVD} operates in a universal embedding, enabling precise removal of bias components while preserving semantic and stylistic information, thereby achieving faithful negation-aware generation.}
\label{figure_1}
\end{figure}

\section{Introduction} \label{section_1}
Recent Diffusion Transformer (DiT) based text-to-image (T2I) models have achieved remarkable progress in visual quality and semantic alignment capabilities ~\citep{flux2024,podell2024sdxl}, but they still struggle to reliably handle negation prompts~\cite{li2023get,chen2026normalized}.  For example, even for a prompt of ``a bakery without bread,'' the undesired concept ``bread'' is often still generated as shown in Fig.~\ref{figure_2}.

% [2] 신규: 부정 프롬프트의 특징 → entanglement 프레이밍
A negation prompt inherently contains two coupled semantic components: contextual semantics to be \emph{preserved} and target attributes to be \emph{removed}. The difficulty of negation largely depends on the semantic bias between these components. When weakly correlated (e.g., \emph{room} and \emph{bread}), the negated concept is reliably removed; however, strongly coupled pairs, such as \emph{bakery} and \emph{bread}, often persist in generated images. This mainly arises from implicit linguistic biases formed during large-scale training~\cite{geirhos2020shortcut,park2026multi}, which strongly entangle certain semantics within text representations~\cite{fu2026fairimagen,friedrich2023fair}. Therefore, negation in T2I models is fundamentally a problem of disentangling preserved and removed semantics rather than simply suppressing target concepts.

% [3] 기존 방법의 한계 % 
Existing negation-aware generation methods mainly focus on suppressing unwanted semantics through attention manipulation~\cite{chen2026normalized} or direct embedding suppression~\cite{li2023get}. However, as illustrated in Fig.~\ref{figure_1}(a), suppression-only methods struggle under strong semantic coupling~\cite{saha2025side}, consequently suffering from semantic collapse, style distortion, or incomplete negation.

% % [4] 제안 방법 및 핵심 아이디어 (통합 및 축약)
To address these limitations, we propose \methodname{} (\textbf{De}compose and \textbf{R}ec\textbf{O}mpose text embedding), a simple yet effective method that treats negation failure as representation-level bias entanglement. Unlike suppression based methods\cite{li2023get,biswas2025cure}, \methodname{} constructs biased and semantic auxiliary prompt sets ($A_B$ and $A_S$) and performs SVD based bidirectional disentanglement. Given ``a bakery without bread,'' the method selectively removes only bread-related semantics entangled within the bakery representation. Through semantic recomposition and residual style restoration before denoising, \methodname{} suppresses undesired semantics while preserving contextual structure and visual style. This representation-level intervention is supported by theoretical analysis, and fundamentally differs from methods relying on attention manipulation~\cite{chen2026normalized,biswas2025cure} or iterative sampling optimization~\cite{na2025diffusion}.

% [6] 일반화 및 적용성
Because the proposed method requires only a one-time auxiliary prompt construction stage and operates on low-dimensional embedding subspaces, \methodname{} introduces negligible additional computational overhead during inference. 
Furthermore, representation-level semantic decomposition naturally generalizes from object-level negation to more abstract concept-level negation scenarios. Without requiring parameter updates or additional fine-tuning, \methodname{} is model-agnostic and can be seamlessly integrated into diverse diffusion architectures, as demonstrated in both U-Net~\cite{rombach2022high} and DiT based models~\cite{flux2024,podell2024sdxl}.

% [7] Contributions
\noindent The main contributions of this paper are as follows:
\begin{itemize}[leftmargin=1.2em, itemsep=2pt, topsep=4pt]
\item A simple yet effective training-free negation-aware generation method is proposed from the perspective of representation-level semantic disentanglement, which is model-agnostic and applicable to diverse diffusion architectures, including both U-Net and DiT based models.
\item An auxiliary prompt-driven bidirectional SVD strategy is introduced to disentangle bias-related semantics from text representations, while preserving visual style through semantic recomposition and residual restoration, supported by formal theoretical analysis.
\item Extensive experiments on object- and concept-level negation benchmarks demonstrate consistent improvements over existing baselines, achieving up to 27.4\% improvement in TIFA score and 2.7\% gain in PickScore, while consistently improving negation understanding in both conventional U-Net and recent DiT based diffusion architectures.
\end{itemize}

\section{Related Work} \label{section_2}
\paragraph{Text-Image Alignment.}
T2I research has primarily focused on improving text-image alignment through techniques such as reward optimization~\cite{na2025diffusion}, representation editing~\cite{arad2024refact}, and attention strength modulation~\cite{park2026pedit}.
While recent approaches such as DATE~\cite{na2025diffusion} and SoftREPA~\cite{lee2025aligning} further improve alignment, they primarily target general text-image correspondence rather than the representation-level semantic entanglement inherent in negation prompts.
More recently, negation-specific methods have emerged. SupEOT~\cite{li2023get} performs explicit object suppression through U-Net-based interventions, while NAG~\cite{chen2026normalized} employs attention extrapolation in transformer-based architectures. However, these methods rely on architecture-specific mechanisms, limiting their applicability in heterogeneous diffusion backbones.

\paragraph{Semantic Erasure.}
Concept erasure in diffusion models typically involves model editing~\cite{lu2024mace,gandikota2023erasing,biswas2025cure} or text editing~\cite{li2023get,Han-2025-CCRT}. While model editing often irreversibly alters model parameters and may affect generalization, text editing primarily performs global concept suppression but struggles with conditional negation, which requires context-aware selective removal. Although recent methods attempt to mitigate unintended side effects by localizing the erasure, such as leveraging graph-based spaces~\cite{bui2025fantastic} or null-space editing~\cite{li2025speed}, they still fundamentally rely on direct suppression of semantic directions.
%%%%%%%%%%%%%%%%%%%%%%%%%%%%%%%%%%%%%%%%%%%%%%%%%%%%%%%%%%%%%%%%%%%%%%%%%%%%%%%%%%%%%%%%%%%%%%%%%%%%%%%%%%%%%%%%%%%

\section{Proposed Method} \label{section_3}
\subsection{Motivation}
\label{section_3_1}
Recent diffusion models~\cite{podell2024sdxl,flux2024} have substantially improved text understanding and text-image alignment capabilities. Furthermore, recent architectures increasingly leverage multiple text encoders, such as CLIP~\cite{radford2021learning} and T5~\cite{raffel2020exploring}, to jointly capture global semantic context and fine-grained textual details. Despite these improvements, negation prompts remain challenging because removing a target concept requires suppressing not only explicit textual information but also implicit semantic priors learned from large-scale training data.

Fig.~\ref{figure_2}(a) illustrates this phenomenon using the prompt ``a bakery with no bread.'' Although the prompt explicitly specifies the absence of bread, the generated image still contains bread-related content. This failure suggests that the concept of a bakery is strongly associated with bread through implicit data bias, causing bias-aligned semantics to dominate over the intended negation constraint\cite{kang2025fairgen,orgad2023editing}. In contrast, as shown in Fig.~\ref{figure_2}(b), prompts with weaker semantic associations naturally preserve the desired semantics while successfully removing the target concept.

A straightforward solution is to project text embeddings into a semantic-preserving subspace using SVD based disentanglement. However, as shown in Fig.~\ref{figure_2}(c), uni-directional projection often removes not only bias-related representations but also contextual semantics and visual style information. While the undesirable concept is partially suppressed, important characteristics such as scene identity and visual appearance can also be lost.

To address this limitation, we propose a bidirectional SVD formulation that explicitly separates bias-aligned semantics from semantic-preserving representations while additionally restoring residual stylistic information. As shown in Fig.~\ref{figure_2}(d), the proposed formulation removes implicit bias while preserving both semantic consistency and visual style, leading to faithful negation-aware generation.

\begin{figure}[t]
\centering
\includegraphics[width=\linewidth]{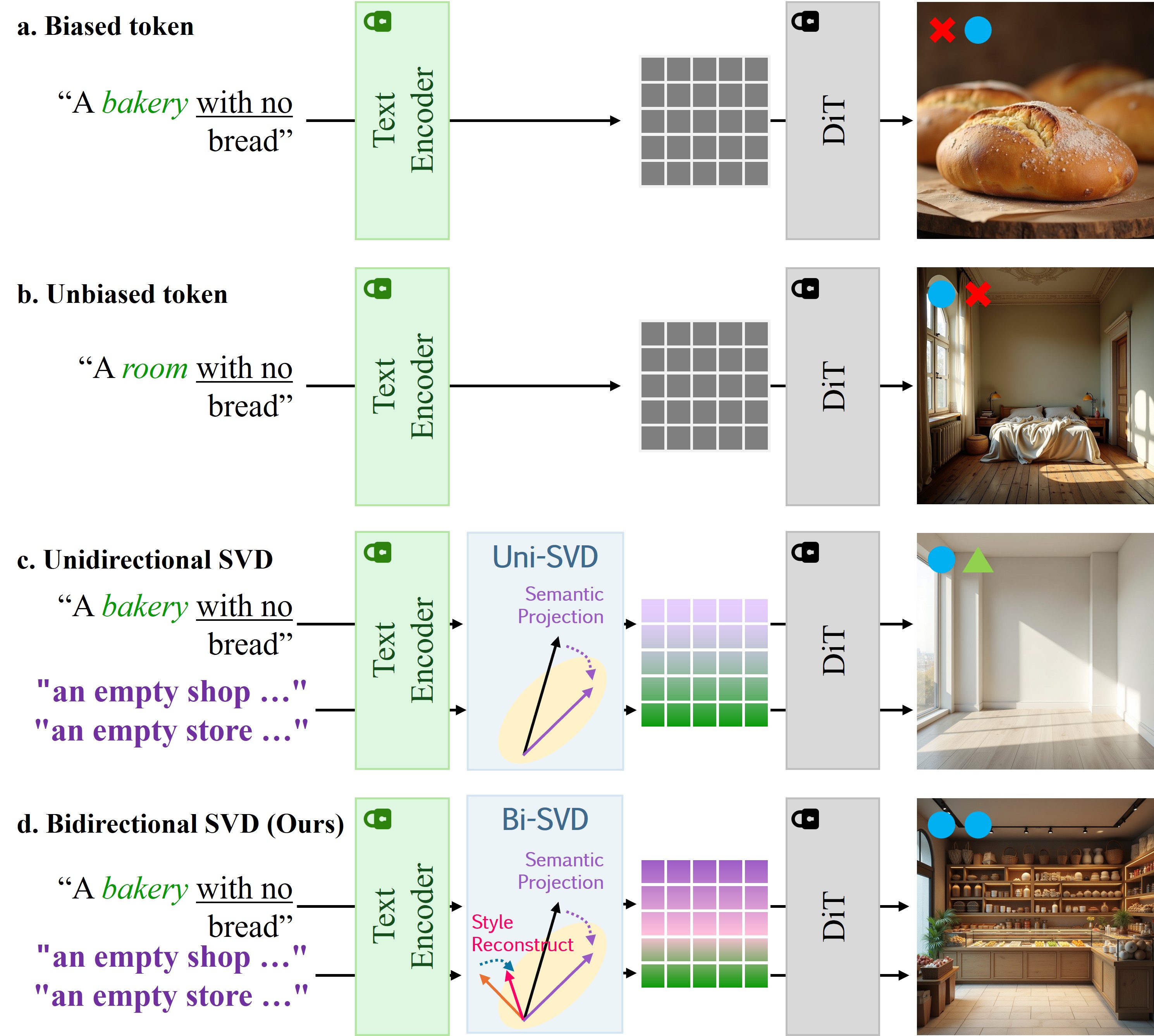}
\caption{Comparison of semantic disentanglement in negation.
(a) Strongly coupled: negation fails,
(b) Weakly coupled: style lost,
(c) Uni-SVD: style degraded, and
(d) Bi-SVD (Ours): negation succeeds, style preserved.
For each result, the left mark indicates whether the negation constraint is satisfied, and the right mark indicates whether the original contextual style (e.g., bakery interior) is preserved.}
\label{figure_2}
\end{figure}

\begin{figure*}[t]
\centering
\includegraphics[width=\textwidth]{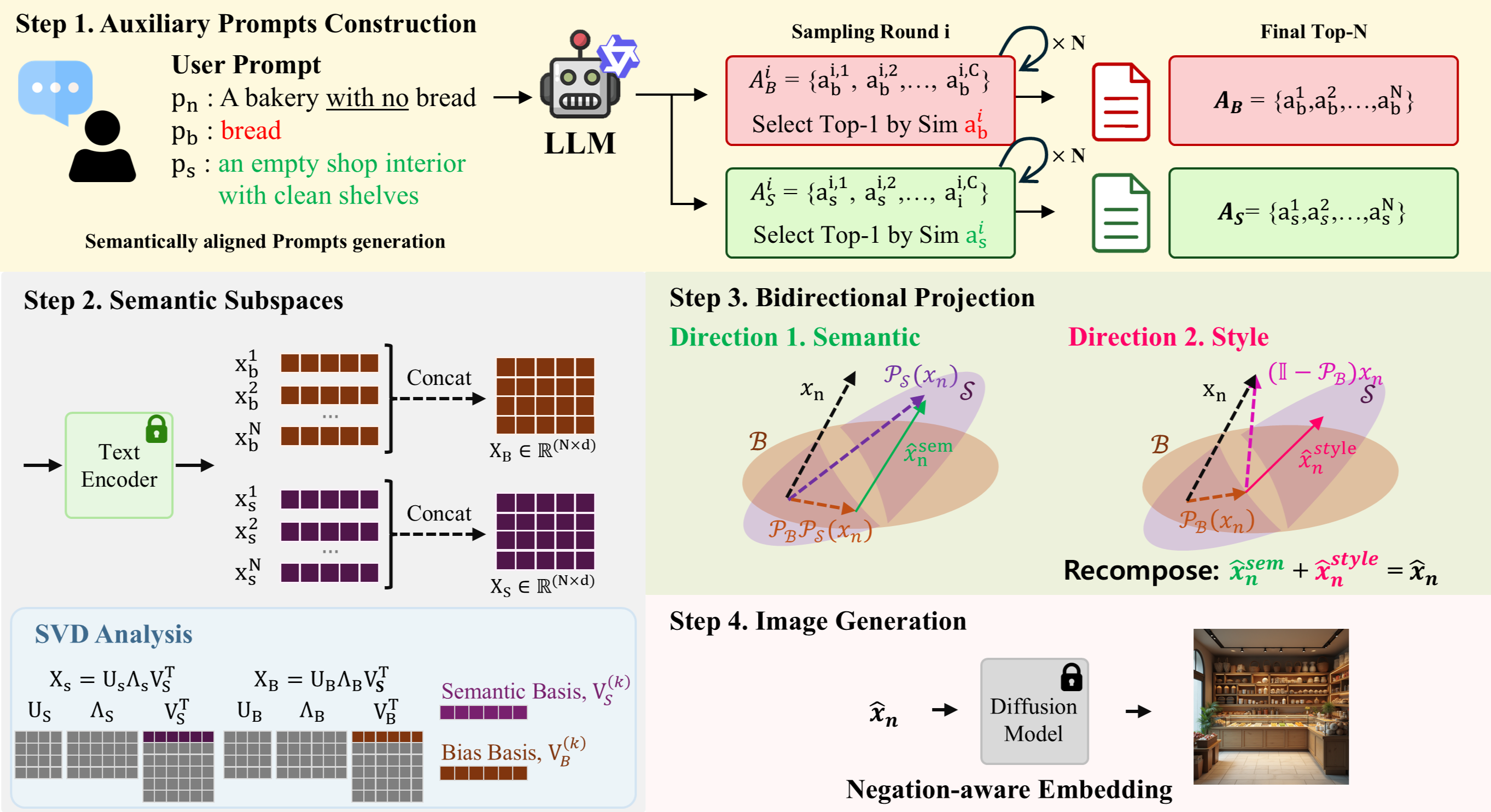}
\caption{Overview of \methodname{}. (Step 1) Biased and semantic auxiliary prompts are iteratively generated via an LLM and ranked by semantic similarity. (Step 2) Text embeddings of auxiliary prompts are encoded and decomposed via SVD to estimate the semantic basis $V^{(k)}_\mathcal{S}$ and bias basis $V^{(k)}_\mathcal{B}$. (Step 3) Bidirectional projection constructs $\hat{x}_n = \hat{x}_n^{\mathrm{sem}} + \hat{x}_n^{\mathrm{style}}$ via two directions. Direction 1 removes bias after semantic alignment, and Direction 2 restores style after bias removal. (Step 4) The recomposed negation-aware embedding $\hat{x}_n$ is applied for image generation.}
\label{figure_3}
\end{figure*}

\subsection{Negation-aware Text Embedding} \label{section_3_2}
\paragraph{Problem Setup.} \label{section_3_2_1}
Fig.~\ref{figure_3} illustrates the overall pipeline. A key challenge in negation-aware generation is that negated semantics cannot be represented as an independent embedding, since target semantics are strongly entangled with contextual and stylistic semantics within pretrained text representations. For example, in ``bakery with no bread,'' bread-related semantics remain deeply coupled with bakery-related contextual semantics due to pretrained co-occurrence priors (more details are in Appendix~\ref{section_B}). Consequently, direct suppression often removes useful contextual information or leaves residual target semantics in generated images.

% overview
To address this issue, given a negation prompt $p_n$, two auxiliary prompt sets are constructed to estimate disentangled semantic structures. The \textbf{semantic auxiliary prompt set $A_S$} captures information that should be preserved under negation constraints, whereas the \textbf{biased auxiliary prompt set $A_B$} models dominant semantics aligned with the negated target attribute. Each prompt set is transformed into an embedding matrix, from which a semantic-preserving subspace $\mathcal{S}$ and a bias subspace $\mathcal{B}$ are estimated via SVD. The original text embedding $x_n$, obtained from the negation prompt $p_n$, is then projected onto $\mathcal{S}$ while suppressing bias-aligned components within $\mathcal{B}$. Residual reconstruction further restores contextual and stylistic information, and the recomposed embedding $\hat{x}_n$ is injected into a frozen diffusion model for negation-aware image generation.

\begin{table}[h]
\centering
\small 
\setlength{\tabcolsep}{5pt}
\renewcommand{\arraystretch}{1.15}
\begin{tabularx}{\columnwidth}{@{}lX@{}}
\toprule
Field & Content \\
\midrule
Prompt & a shoe without laces \\
Biased prompt & shoelaces \\
Semantic prompt & a slip-on shoe without stripes \\
\midrule
\multicolumn{2}{@{}l@{}}{Biased auxiliary prompts} \\
\multicolumn{2}{@{}l@{}}{-- a sneaker with clearly tied shoelaces} \\
\multicolumn{2}{@{}l@{}}{-- a lace-up shoe with threaded eyelets} \\
\midrule
\multicolumn{2}{@{}l@{}}{Semantic auxiliary prompts} \\
\multicolumn{2}{@{}l@{}}{-- a slip-on shoe with a clean seamless upper} \\
\multicolumn{2}{@{}l@{}}{-- a minimalist loafer with no fastening elements} \\
\bottomrule
\end{tabularx}
\caption{Examples of auxiliary prompt.}
\label{table_1} \end{table}

\paragraph{Auxiliary Prompt Construction.} \label{section_3_2_2}

In object-level negation, it is difficult to explicitly define a semantic state where the target attribute is completely removed. For example, in ''a bakery with no bread'',  bread-related semantics remain strongly entangled with the bakery representation, making direct construction of a bread-free embedding highly challenging (See \S~\ref{section_B_1}). Moreover, semantically similar directions may still preserve residual bias even after partial suppression~\cite{yin2019semantics,huang2025implicit}. Therefore, instead of relying solely on a single negation prompt, we construct auxiliary prompt sets to estimate stable bias-reduced semantic structures.

Specifically, we construct a biased auxiliary prompt set $A_B = \{a_b^i\}_{i=1}^{N}$ and a semantic auxiliary prompt set $A_S = \{a_s^i\}_{i=1}^{N}$ to refine the representation of the negation prompt $p_n$. 
To guide this process, we define the biased prompt $p_b$ and the semantic prompt $p_s$. While $p_b$ captures the target attribute to be removed (e.g., bread) along with its deeply entangled bias components, $p_s$ preserves the desired contextual meaning without the target (e.g., an empty bakery interior). 
To ensure diversity and stability, the auxiliary prompts are iteratively generated using an LLM (e.g., Qwen2.5-Instruct) conditioned on $p_b$ and $p_s$, respectively. In each of the $N$ rounds, the LLM generates $C$ candidates, and the one maximizing semantic similarity to its corresponding target concept is selected. Detailed LLM instructions and specific examples of these prompts can be found in Appendix~\ref{section_D_1} and Table~\ref{table_1}.

\paragraph{Semantic Subspaces.} \label{section_3_2_3}

% Semantic Subspaces
% S: Semantic-Preserving Subspace
% B: Bias Subspace

% We first include the original negation prompt $p_n$ in both sets to preserve the target editing context. 
Each prompt in $A_B$ and $A_S$, along with the original negation prompt $p_n$, is transformed into the embedding space using a frozen text encoder, yielding the biased embedding matrix $X_B \in \mathbb{R}^{N \times d}$ and the semantic embedding matrix $X_S \in \mathbb{R}^{N\times d}$, where $d$ denotes the embedding dimension. To extract dominant semantic directions, we apply SVD:
\begin{equation}
X_S = U_S \Sigma_S V_S^\top,
\qquad
X_B = U_B \Sigma_B V_B^\top,
\label{equation_1}
\end{equation}
where the column vectors of $V_S$ and $V_B$ represent orthogonal bases of the semantic-preserving subspace and bias subspace, respectively, while $\Sigma_S$ and $\Sigma_B$ contain singular values indicating the importance of each semantic direction. Here, $k$ denotes the number of dominant semantic components used to construct the subspace. In practice, we empirically found that using only the first singular component ($k=1$), corresponding to the largest singular value, consistently produced the best performance across different settings. Consequently, the semantic-preserving subspace $\mathcal{S}$ and bias subspace $\mathcal{B}$ are defined by $V_S^{(k)}$ and $V_B^{(k)}$, respectively. Here, $\mathcal{S}$ represents the semantic direction that preserves the desired semantic structure after removing the target attribute, whereas $\mathcal{B}$ captures the implicit semantic bias direction strongly entangled with the target attribute.

\begin{figure*}[t]
\centering
\includegraphics[width=\textwidth]{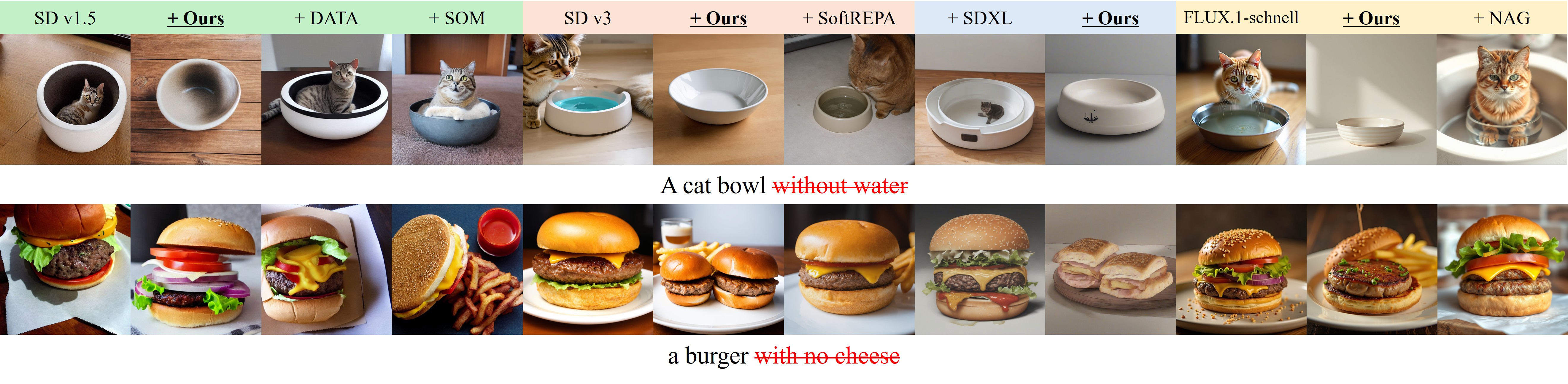}
\caption{Comparison of object negation in different diffusion backbones. Existing methods often fail to completely remove the negated concept or degrade visual style, whereas the proposed method achieves faithful negation while preserving semantic structure and visual fidelity. Red strike-through text indicates the negated concepts.}
\label{figure_4}
\end{figure*}

\paragraph{Bidirectional Projection.} \label{section_3_2_4}

Decomposing text embeddings requires careful structural manipulation, as implicit biases and contextual features are tightly coupled. Consequently, a naive one-directional projection inevitably creates a significant trade-off between bias mitigation and contextual fidelity. To overcome this limitation, we propose a bidirectional projection mechanism that simultaneously enhances semantic alignment while suppressing residual bias components.

Specifically, the original negation embedding $x_n$ is first projected onto the semantic-preserving subspace $\mathcal{S}$ to extract the desired semantic representation, and the residual bias component is subsequently removed to obtain the bias-reduced semantic representation:
\begin{equation}
\label{equation_2}
\hat{x}_n^{\mathrm{sem}} = (\mathbb{I} - \mathcal{P}_\mathcal{B})\mathcal{P}_\mathcal{S}(x_n),
\end{equation}
where $\mathbb{I}$ denotes the identity operator, and $\mathcal{P}_\mathcal{S}$ and $\mathcal{P}_\mathcal{B}$ represent the orthogonal projection operators onto $\mathcal{S}$ and $\mathcal{B}$  (see Appendix~\ref{section_D_2} for details).

However, excessive semantic alignment may inadvertently remove stylistic information. To compensate for this effect, we utilize the residual representation after removing the bias component from the original embedding and project it onto the semantic-preserving subspace to restore contextual and style-related information:
\begin{equation}
\label{equation_3}
\hat{x}_n^{\mathrm{style}} = \mathcal{P}_\mathcal{S}(\mathbb{I} - \mathcal{P}_\mathcal{B})(x_n).
\end{equation}

Finally, the semantic-aligned representation and the restored residual representation are combined to construct the final negation-aware embedding:
\begin{equation}
\hat{x}_n = \hat{x}_n^{\mathrm{sem}} + \hat{x}_n^{\mathrm{style}}.
\label{equation_4}
\end{equation}
The recomposed negation-aware embedding $\hat{x}_n$ is then injected into the frozen diffusion model, enabling negation-aware image generation while preserving the original semantic structure and visual style.

\subsection{Theoretical Analysis} \label{section_3_3}
In this section, we theoretically analyze the underlying mechanisms of Uni-directional SVD (Uni-SVD) and Bidirectional SVD (Bi-SVD), confirming how \methodname{} resolves the fundamental trade-off between bias suppression and style preservation. The detailed formal statement and proofs are provided in Appendix~\ref{section_C}. We assume that a text embedding $x_n$ is decomposed into entangled semantic, bias, and stylistic components within a shared embedding space: $x_n = x_n^{\mathrm{sem}} + x_n^{\mathrm{bias}} + x_n^{\mathrm{style}}$.

\begin{theorem}[Uni-SVD: Style Degradation] \label{theorem_1}
Uni-SVD mitigates bias semantics through the operator 
$\mathcal{T}_{\mathrm{uni}} = \mathbb{I} - \mathcal{P}_\mathcal{B}$. 
However, because stylistic and biased components are inherently 
entangled (i.e., the stylistic representation shares significant 
overlap with the bias subspace $\mathcal{B}$), $\mathcal{T}_{\mathrm{uni}}$ 
inadvertently diminishes these stylistic features alongside the target 
bias, yielding $\hat{x}_n^{\mathrm{style}} = 0$. Consequently, applying 
a one-directional projection often leads to a noticeable degradation 
in visual style when bias suppression is enforced.
\end{theorem}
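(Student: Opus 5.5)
The statement is informal, so I will first fix a concrete model and then verify the claim by linear algebra. I will work with the decomposition $x_n = x_n^{\mathrm{sem}} + x_n^{\mathrm{bias}} + x_n^{\mathrm{style}}$ introduced at the start of \S\ref{section_3_3}. I will read ``the stylistic representation shares significant overlap with $\mathcal{B}$'' in its extreme form, $x_n^{\mathrm{style}} \in \mathcal{B}$. To follow the setting used in practice ($k=1$), I will take $\mathcal{B} = \mathrm{span}(v_B)$ with $v_B$ the first column of $V_B$ from Eq.~(\ref{equation_1}), and $\mathcal{P}_\mathcal{B} = v_B v_B^\top$. I will also assume $x_n^{\mathrm{bias}} \in \mathcal{B}$, since $\mathcal{B}$ is estimated from $A_B$ precisely to capture the bias, and $x_n^{\mathrm{sem}} \perp \mathcal{B}$. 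The quantity $\hat{x}_n^{\mathrm{style}}$ in the theorem will then be understood as the style component surviving the Uni-SVD operator, i.e.\ the projection of $\mathcal{T}_{\mathrm{uni}}(x_n)$ onto the style direction.

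The main computation applies $\mathcal{T}_{\mathrm{uni}} = \mathbb{I}-\mathcal{P}_\mathcal{B}$ term by term using linearity. Because $\mathcal{P}_\mathcal{B}$ is an orthogonal projector (idempotent and self-adjoint), $\mathcal{T}_{\mathrm{uni}}$ is the orthogonal projector onto $\mathcal{B}^\perp$. It annihilates every vector in $\mathcal{B}$ and fixes every vector in $\mathcal{B}^\perp$. Therefore $\mathcal{T}_{\mathrm{uni}}(x_n^{\mathrm{bias}})=0$, which is the intended bias removal, and $\mathcal{T}_{\mathrm{uni}}(x_n^{\mathrm{sem}})=x_n^{\mathrm{sem}}$. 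Under the entanglement hypothesis we also get $\mathcal{T}_{\mathrm{uni}}(x_n^{\mathrm{style}})=0$. Hence $\mathcal{T}_{\mathrm{uni}}(x_n)=x_n^{\mathrm{sem}}$ and $\hat{x}_n^{\mathrm{style}}=0$, which is the claimed statement.

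To justify ``often leads to degradation'' beyond the extreme case, I will relax the hypothesis. Write $x_n^{\mathrm{style}} = \alpha v_B + w$ with $w\perp\mathcal{B}$, and define the overlap $\rho = \|\mathcal{P}_\mathcal{B} x_n^{\mathrm{style}}\|/\|x_n^{\mathrm{style}}\|$. The Pythagorean identity then gives
\[
\|\mathcal{T}_{\mathrm{uni}} x_n^{\mathrm{style}}\|^2 = (1-\rho^2)\,\|x_n^{\mathrm{style}}\|^2 .
\]
So the retained style energy decreases monotonically in the overlap and vanishes exactly when $\rho = 1$. This establishes a trade-off: stronger bias suppression, meaning a larger $\mathcal{B}$ that captures more of $x_n^{\mathrm{bias}}$, can only increase $\rho$ and so removes more style. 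The monotonicity follows because enlarging $\mathcal{B}$ enlarges $\|\mathcal{P}_\mathcal{B}x\|$ for every $x$.

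The main obstacle is the modelling step, not the algebra. The statement's hypothesis is qualitative, and the notation $\hat{x}_n^{\mathrm{style}}$ clashes with Eq.~(\ref{equation_3}), where it denotes the Bi-SVD style term. I will resolve this by stating the assumption $x_n^{\mathrm{style}}\in\mathcal{B}$ explicitly, with the quantitative $\rho$-version as the general form, and by noting that the conclusion concerns the Uni-SVD output only. The contrast with Eq.~(\ref{equation_3}) is left to the subsequent Bi-SVD analysis.
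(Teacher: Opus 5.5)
Your proof is correct and is essentially the paper's own argument. The appendix version assumes $x_n^{\mathrm{bias}}\in\mathcal{B}$ and a bias-entangled style component $x_n^{\mathrm{style},B}\in\mathcal{B}$, alongside a separate $x_n^{\mathrm{style},S}\in\mathcal{S}$ that Theorem~1 does not address, and then simply notes that $\mathbb{I}-\mathcal{P}_{\mathcal{B}}$ annihilates everything in $\mathcal{B}$, exactly as you do.

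Your identity $\|\mathcal{T}_{\mathrm{uni}}x_n^{\mathrm{style}}\|^2=(1-\rho^2)\|x_n^{\mathrm{style}}\|^2$ is a correct quantitative strengthening that the paper does not give. However, fixing $k=1$ is unnecessary and makes $x_n^{\mathrm{bias}}$ and $x_n^{\mathrm{style}}$ collinear, so that \emph{any} linear map removing the bias also removes the style. Keep $\mathcal{B}$ general if the model is to support the later Bi-SVD contrast; note that the paper's (A2)--(A3) likewise force $x_n^{\mathrm{bias}}=0$ when $\dim\mathcal{B}=1$.
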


\begin{theorem}[Bi-SVD: Style Recovery] \label{theorem_2}
In contrast, the bidirectional operator 
$\mathcal{T}_{\mathrm{bi}}= (\mathbb{I} - \mathcal{P}_\mathcal{B})\mathcal{P}_\mathcal{S}
+\mathcal{P}_\mathcal{S}(\mathbb{I} - \mathcal{P}_\mathcal{B})$ 
overcomes this limitation through cross-subspace projection. 
Specifically, by additionally constructing $\hat{x}_n^{\mathrm{style}}$ 
via cross-subspace projection, Bi-SVD explicitly rescues the stylistic 
content that Uni-SVD destroys.
\end{theorem}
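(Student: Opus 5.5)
The statement asserts that $\mathcal{T}_{\mathrm{bi}}$ recovers, through the cross term $\mathcal{P}_\mathcal{S}(\mathbb{I}-\mathcal{P}_\mathcal{B})$, stylistic content that $\mathcal{T}_{\mathrm{uni}}$ loses. The plan is to make this precise in a linear model and compare what survives of $x_n^{\mathrm{style}}$ under the two operators. As in Theorem~\ref{theorem_1}, take $x_n = x_n^{\mathrm{sem}} + x_n^{\mathrm{bias}} + x_n^{\mathrm{style}}$. Let $\mathcal{S}$ and $\mathcal{B}$ be the subspaces spanned by $V_S^{(k)}$ and $V_B^{(k)}$, with orthogonal projectors $\mathcal{P}_\mathcal{S}=V_SV_S^\top$ and $\mathcal{P}_\mathcal{B}=V_BV_B^\top$. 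The modelling assumptions are:
\begin{itemize}
\item $x_n^{\mathrm{sem}}\in\mathcal{S}$ and $x_n^{\mathrm{bias}}\in\mathcal{B}$;
\item the style component is entangled with both subspaces, so it has nonzero projections onto $\mathcal{S}$ and onto $\mathcal{B}$.
\end{itemize}
In the extreme case of Theorem~\ref{theorem_1}, the style part that Uni-SVD sees lies in $\mathcal{B}$. Since $\mathcal{S}$ and $\mathcal{B}$ are not assumed orthogonal, I will write $\mathcal{P}_\mathcal{S}\mathcal{P}_\mathcal{B}\neq\mathcal{P}_\mathcal{B}\mathcal{P}_\mathcal{S}$ in general. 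Everything will be parametrized by the principal angle $\theta$ between the rank-one subspaces (the case $k=1$), so that $\mathcal{P}_\mathcal{S}\mathcal{P}_\mathcal{B}=\cos\theta\, v_S v_B^\top$.

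The first step is to expand $\mathcal{T}_{\mathrm{bi}} = 2\mathcal{P}_\mathcal{S} - \mathcal{P}_\mathcal{B}\mathcal{P}_\mathcal{S} - \mathcal{P}_\mathcal{S}\mathcal{P}_\mathcal{B}$ and apply it to each of the three components in turn.
\begin{itemize}
\item \emph{Semantic term.} For $x_n^{\mathrm{sem}}\in\mathcal{S}$ this gives $2x_n^{\mathrm{sem}} - \mathcal{P}_\mathcal{B}x_n^{\mathrm{sem}} - \cos^2\theta\, x_n^{\mathrm{sem}}$, which is a nonzero multiple of $x_n^{\mathrm{sem}}$ plus a $\mathcal{B}$-direction term of size $\cos\theta$.
\item \emph{Bias term.} For $x_n^{\mathrm{bias}}\in\mathcal{B}$, the two pieces $(\mathbb{I}-\mathcal{P}_\mathcal{B})\mathcal{P}_\mathcal{S}x_n^{\mathrm{bias}}$ and $\mathcal{P}_\mathcal{S}(\mathbb{I}-\mathcal{P}_\mathcal{B})x_n^{\mathrm{bias}}$ give $\cos\theta\,(v_S-\cos\theta\, v_B)\langle v_B,x_n^{\mathrm{bias}}\rangle$ and $0$ respectively. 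So bias suppression is at least as strong as the uni case up to an $O(\cos\theta)$ leakage orthogonal to $\mathcal{B}$.
\item \emph{Style term.} Decompose $x_n^{\mathrm{style}} = s_\parallel + s_\perp$ with $s_\parallel\in\mathcal{B}$ and $s_\perp\perp\mathcal{B}$. By the hypothesis of Theorem~\ref{theorem_1}, $\mathcal{T}_{\mathrm{uni}}$ annihilates $s_\parallel$, which is the style content it "destroys". I will then compute $\mathcal{T}_{\mathrm{bi}}s_\parallel = (\mathbb{I}-\mathcal{P}_\mathcal{B})\mathcal{P}_\mathcal{S}s_\parallel$, which has norm $\cos\theta\sin\theta\,\|s_\parallel\|$ and is nonzero whenever $0<\theta<\pi/2$. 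Likewise $\mathcal{P}_\mathcal{S}(\mathbb{I}-\mathcal{P}_\mathcal{B})s_\perp$ is nonzero when $s_\perp$ has an $\mathcal{S}$ component.
\end{itemize}

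The conclusion to state is $\hat{x}_n^{\mathrm{style}} = \mathcal{P}_\mathcal{S}(\mathbb{I}-\mathcal{P}_\mathcal{B})x_n \neq 0$ under mild non-degeneracy, in contrast with the value $\hat{x}_n^{\mathrm{style}}=0$ recorded in Theorem~\ref{theorem_1}. Equivalently, the style energy retained by $\mathcal{T}_{\mathrm{bi}}$ is strictly larger than under $\mathcal{T}_{\mathrm{uni}}$ on the entangled part.

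The main obstacle is that the theorem's claim is informal. "Rescues" must be pinned to a definite quantity, and the algebra shows the cross term recovers style only when $\mathcal{S}$ and $\mathcal{B}$ are neither orthogonal nor identical, i.e. $0<\theta<\pi/2$. I will therefore state the result as a lower bound, $\|\mathcal{P}_{\mathcal{S}}\mathcal{T}_{\mathrm{bi}}\,x_n^{\mathrm{style}}\|\ge c(\theta)\|x_n^{\mathrm{style}}\|$ with $c(\theta)>0$ on $(0,\pi/2)$, and flag the degenerate angles explicitly. I will also concede that exact restoration of the original style vector is not claimed, only recovery of its projection within $\mathcal{S}$.
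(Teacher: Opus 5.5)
Your core computation matches the paper's part (c). On the $\mathcal{B}$-component of style, the style branch $\mathcal{P}_\mathcal{S}(\mathbb{I}-\mathcal{P}_\mathcal{B})$ vanishes, and the semantic branch returns $(\mathbb{I}-\mathcal{P}_\mathcal{B})\mathcal{P}_\mathcal{S}s_\parallel$. At $k=1$ this has norm $\cos\theta\sin\theta\,\|s_\parallel\|$. Your condition $0<\theta<\pi/2$ justifies the nonvanishing more carefully than the paper, which only has $\eta_b\neq 0$.

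However, the bound you propose to state as the theorem is false. At $k=1$,
\[
\mathcal{P}_\mathcal{S}\mathcal{T}_{\mathrm{bi}}\,x=\bigl[(2-\cos^2\theta)\langle v_S,x\rangle-\cos\theta\,\langle v_B,x\rangle\bigr]\,v_S
\]
is a rank-one map, so no inequality $\|\mathcal{P}_\mathcal{S}\mathcal{T}_{\mathrm{bi}}x_n^{\mathrm{style}}\|\ge c(\theta)\|x_n^{\mathrm{style}}\|$ can hold. For example, $x_n^{\mathrm{style}}=v_S-(2/\cos\theta)\,v_B$ has nonzero projections onto both $\mathcal{S}$ and $\mathcal{B}$, exactly as your hypothesis requires, yet it is sent to $0$. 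Adding any component orthogonal to $\mathrm{span}(v_S,v_B)$ also makes the ratio arbitrarily small. The bound holds only on $\mathcal{B}$, where $\mathcal{P}_\mathcal{S}\mathcal{T}_{\mathrm{bi}}s_\parallel=\cos\theta\sin^2\theta\,\langle v_B,s_\parallel\rangle\, v_S$. That restricted bound is all you need, since $s_\perp$ is not destroyed by $\mathcal{T}_{\mathrm{uni}}$ in the first place.

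The deeper problem is the rank-one model itself. At $k=1$ we have $\mathcal{B}=\mathrm{span}(v_B)$, so $s_\parallel$ and $x_n^{\mathrm{bias}}$ are collinear. By your own bias computation, $\mathcal{T}_{\mathrm{bi}}$ maps both to multiples of $v_S-\cos\theta\,v_B$ with the same factor $\cos\theta$. The style you rescue and the bias you leak therefore come out in the same proportion $\cos\theta\sin\theta$. What you have shown is that Bi-SVD suppresses $\mathcal{B}$ less than Uni-SVD. You have not shown that it overcomes the limitation behind Theorem 1, namely that $\mathcal{B}$ is treated uniformly and bias cannot be told apart from style inside it.

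The paper gets that separation from hypotheses you do not have: (A2) $\mathcal{P}_\mathcal{S}(x_n^{\mathrm{bias}})=0$ versus (A3) $\mathcal{P}_\mathcal{S}(x_n^{\mathrm{style},B})=\eta_b\neq 0$. In other words, bias and bias-entangled style are distinguished by whether they have a shadow on $\mathcal{S}$. This yields, at the same time, $\mathcal{T}_{\mathrm{bi}}(x_n^{\mathrm{bias}})=0$ exactly (part (a)) and $\mathcal{T}_{\mathrm{bi}}(x_n^{\mathrm{style},B})=(\mathbb{I}-\mathcal{P}_\mathcal{B})\eta_b$ (part (c)). It also gives the accounting identity $\mathcal{P}_\mathcal{S}\mathcal{T}_{\mathrm{bi}}(x_n)=2\mathcal{P}_\mathcal{S}\mathcal{T}_{\mathrm{uni}}(x_n)+\Delta_{\mathrm{style}}$ in part (d).

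With nonzero components, these hypotheses require $\dim\mathcal{B}\ge 2$, and likewise (A1) and (A4) require $\dim\mathcal{S}\ge 2$. They therefore cannot be realized in your $k=1$ parametrization. To prove the statement in its intended sense, work with general subspaces and impose a separation of the (A2)/(A3) type. If you keep $k=1$, you must say explicitly that the recovery is non-selective: it restores bias and style within $\mathcal{B}$ indiscriminately.
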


\begin{table*}[t]
\centering
\begingroup
\setlength{\tabcolsep}{4pt}
\renewcommand{\arraystretch}{0.8}
\begin{adjustbox}{max width=\textwidth}
\setlength{\tabcolsep}{2.5pt}
\begin{tabular}{lcccccc r@{~}l r@{~}l r@{~}l}
\toprule
\multirow{2}{*}{\textbf{Model}}
& \multicolumn{8}{c}{TIFA Score $\uparrow$}
& \multicolumn{4}{c}{Image Quality} \\ 
& missing & empty & nothing & nobody & without & no
& \multicolumn{2}{c}{mean}
& \multicolumn{2}{c}{PickScore $\uparrow$}
& \multicolumn{2}{c}{FID $\downarrow$} \\
\midrule
SD v1.5 & 0.142 & 0.252 & 0.525 & 0.950 & 0.166 & 0.722 & 0.460 & & 21.245 & & 195.207 & \\
SD v1.5 + DATE & 0.145 & 0.255 & 0.525 & 0.818 & 0.166 & 0.722 & 0.439 & {\scriptsize\textcolor{softred}{(-0.021)}} & 21.229 & {\scriptsize\textcolor{softred}{(-0.016)}} & 194.615 & {\scriptsize\textcolor{softblue}{(-0.592)}} \\
SD v1.5 + SupEOT & 0.155 & 0.250 & 0.525 & 0.955 & 0.250 & 0.770 & 0.484 & {\scriptsize\textcolor{softblue}{(+0.024)}} & 21.174 & {\scriptsize\textcolor{softred}{(-0.071)}} & 208.241 & {\scriptsize\textcolor{softred}{(+13.034)}} \\
\rowcolor{softpurple}
SD v1.5 + Ours & 0.358 & 0.455 & 0.526 & 0.962 & 0.458 & 0.812 & \textbf{0.595} & {\scriptsize\textcolor{softblue}{(+0.135)}} & 21.815 & {\scriptsize\textcolor{softblue}{(+0.570)}} & 193.091 & {\scriptsize\textcolor{softblue}{(-2.116)}} \\
\midrule
SD v3 & 0.151 & 0.250 & 0.752 & 0.909 & 0.250 & 0.762 & 0.512 & & 21.751 & & 201.016 & \\
SD v3 + SoftREPA & 0.157 & 0.281 & 0.763 & 0.909 & 0.167 & 0.738 & 0.503 & {\scriptsize\textcolor{softred}{(-0.009)}} & 21.751 & {\scriptsize\textcolor{softblue}{(+0.000)}} & 199.158 & {\scriptsize\textcolor{softblue}{(-1.858)}} \\
\rowcolor{softpurple}
SD v3 + Ours
& 0.362 & 0.486 & 0.774 & 0.916 & 0.372 & 0.748 & \textbf{0.610} & {\scriptsize\textcolor{softblue}{(+0.098)}} & 22.134 & {\scriptsize\textcolor{softblue}{(+0.383)}}
& 197.582 & {\scriptsize\textcolor{softblue}{(-3.434)}} \\
\midrule
SDXL & 0.412 & 0.438 & 0.421 & 0.747 & 0.405 & 0.432 & 0.476 & & 20.806 & & 202.049 & \\
\rowcolor{softpurple}
SDXL + Ours & 0.461 & 0.491 & 0.452 & 0.782 & 0.447 & 0.475 & \textbf{0.518} & {\scriptsize\textcolor{softblue}{(+0.042)}} & 20.890 & {\scriptsize\textcolor{softblue}{(+0.084)}} & 201.934 & {\scriptsize\textcolor{softblue}{(-0.115)}} \\
\midrule
FLUX.1       & 0.845 & 0.450 & 0.855 & 0.805 & 0.250 & 0.762 & 0.661 & & 22.265 & & 194.709 & \\
FLUX.1 + NAG & 0.852 & 0.820 & 0.855 & 0.820 & 0.667 & 0.942 & 0.826 & {\scriptsize\textcolor{softblue}{(+0.165)}} & 22.018 & {\scriptsize\textcolor{softred}{(-0.247)}} & 194.965 & {\scriptsize\textcolor{softred}{(+0.256)}} \\
\rowcolor{softpurple}
FLUX.1 + Ours & 0.765 & 0.951 & 0.868 & 1.000 & 0.667 & 0.802 & \textbf{0.842} & {\scriptsize\textcolor{softblue}{(+0.181)}} & 22.268 & {\scriptsize\textcolor{softblue}{(+0.003)}} & 192.375 & {\scriptsize\textcolor{softblue}{(-2.334)}} \\
\bottomrule
\end{tabular}
\end{adjustbox}
\caption{
Quantitative comparison of object negation understanding and image quality in different generative models.
\textcolor{softblue}{Blue} and \textcolor{softred}{red} values indicate improvements and degradations relative to the corresponding base model, respectively, while \textbf{bold} values indicate the best performance in each comparison.
}
\label{table_2}
\endgroup
\end{table*}

\begin{table*}[t]
\centering
\begingroup
\setlength{\tabcolsep}{2.2pt}
\renewcommand{\arraystretch}{0.8}
\begin{adjustbox}{max width=\textwidth}
\begin{tabular}{l cc r@{\;}l r@{\;}l r@{\;}l cc r@{\;}l r@{\;}l r@{\;}l}
\toprule
\multirow{3}{*}{\textbf{Model}}
& \multicolumn{8}{c}{Concept ``Picasso'' Negation}
& \multicolumn{8}{c}{Concept ``Van Gogh'' Negation} \\
\cmidrule(lr){2-9} \cmidrule(lr){10-17}
& LPIPS$\uparrow$ & TIFA$\uparrow$
& \multicolumn{2}{c}{CLIP w/ C$\downarrow$}
& \multicolumn{2}{c}{CLIP w/o C$\uparrow$}
& \multicolumn{2}{c}{PickScore$\uparrow$}
& LPIPS$\uparrow$ & TIFA$\uparrow$
& \multicolumn{2}{c}{CLIP w/ C$\downarrow$}
& \multicolumn{2}{c}{CLIP w/o C$\uparrow$}
& \multicolumn{2}{c}{PickScore$\uparrow$} \\
\midrule

SD v1.4
& -- & 0.083
& 0.507 &
& 0.521 &
& 18.861 &
& -- & 0.125
& 0.743 &
& 0.843 &
& 20.330 & \\

SD v1.4 + SPEED
& \textbf{0.199} & 0.036
& 0.495 & {\scriptsize\textcolor{softblue}{(-0.012)}}
& 0.515 & {\scriptsize\textcolor{softred}{(-0.006)}}
& 19.479 & {\scriptsize\textcolor{softblue}{(+0.618)}}
& 0.134 & 0.042
& 0.584 & {\scriptsize\textcolor{softblue}{(-0.159)}}
& 0.684 & {\scriptsize\textcolor{softred}{(-0.159)}}
& 20.330 & {\scriptsize\textcolor{softblue}{(+0.000)}} \\

SD v1.4 + MACE
& 0.181 & 0.175
& 0.463 & {\scriptsize\textcolor{softblue}{(-0.044)}}
& 0.563 & {\scriptsize\textcolor{softblue}{(+0.042)}}
& 19.841 & {\scriptsize\textcolor{softblue}{(+0.980)}}
& 0.139 & 0.158
& 0.654 & {\scriptsize\textcolor{softblue}{(-0.089)}}
& 0.754 & {\scriptsize\textcolor{softred}{(-0.089)}}
& 20.870 & {\scriptsize\textcolor{softblue}{(+0.540)}} \\

\rowcolor{softpurple}
SD v1.4 + Ours
& 0.181 & \textbf{0.179}
& \textbf{0.341} & {\scriptsize\textcolor{softblue}{(-0.165)}}
& \textbf{0.612} & {\scriptsize\textcolor{softblue}{(+0.091)}}
& \textbf{20.330} & {\scriptsize\textcolor{softblue}{(+1.469)}}
& \textbf{0.146} & \textbf{0.167}
& \textbf{0.339} & {\scriptsize\textcolor{softblue}{(-0.404)}}
& \textbf{0.854} & {\scriptsize\textcolor{softblue}{(+0.011)}}
& \textbf{21.420} & {\scriptsize\textcolor{softblue}{(+1.090)}} \\

\midrule

SD v1.5
& -- & 0.112
& 0.772 &
& 0.772 &
& 19.651 &
& -- & 0.063
& 0.749 &
& 0.749 &
& 21.708 & \\

SD v1.5 + RECE
& 0.202 & 0.139
& 0.598 & {\scriptsize\textcolor{softblue}{(-0.174)}}
& 0.598 & {\scriptsize\textcolor{softred}{(-0.174)}}
& 19.165 & {\scriptsize\textcolor{softred}{(-0.486)}}
& 0.143 & 0.103
& 0.465 & {\scriptsize\textcolor{softblue}{(-0.284)}}
& 0.465 & {\scriptsize\textcolor{softred}{(-0.284)}}
& 18.549 & {\scriptsize\textcolor{softred}{(-3.159)}} \\

\rowcolor{softpurple}
SD v1.5 + Ours
& \textbf{0.233} & \textbf{0.143}
& \textbf{0.346} & {\scriptsize\textcolor{softblue}{(-0.426)}}
& \textbf{0.834} & {\scriptsize\textcolor{softblue}{(+0.156)}}
& \textbf{20.367} & {\scriptsize\textcolor{softblue}{(+0.716)}}
& \textbf{0.148} & \textbf{0.155} 
& \textbf{0.344} & {\scriptsize\textcolor{softblue}{(-0.405)}}
& \textbf{0.864} & {\scriptsize\textcolor{softblue}{(+0.124)}}
& \textbf{21.736} & {\scriptsize\textcolor{softblue}{(+0.028)}} \\

\midrule

SDXL
& -- & 0.428 & 0.594 & 
& 0.415 &
& 20.298 &
& -- & 0.489
& 0.607 & & 0.607 &
& 19.428 & \\

\rowcolor{softpurple}
SDXL + Ours & \textbf{0.216} & \textbf{0.462} & \textbf{0.515} & {\scriptsize\textcolor{softblue}{(-0.079)}}
& \textbf{0.494} & {\scriptsize\textcolor{softblue}{(+0.079)}}
& \textbf{21.264} & {\scriptsize\textcolor{softblue}{(+0.966)}}
& \textbf{0.201} & \textbf{0.527}
& \textbf{0.721} & {\scriptsize\textcolor{softred}{(+0.114)}}
& \textbf{0.721} & {\scriptsize\textcolor{softblue}{(+0.114)}}
& \textbf{20.471} & {\scriptsize\textcolor{softblue}{(+1.043)}} \\

\midrule

FLUX.1
& -- & 0.444
& 0.473 &
& 0.553 &
& 22.075 &
& -- & 0.205
& 0.732 &
& 0.732 &
& 22.109 & \\

FLUX.1 + Erasing
& 0.127 & 0.517
& 0.464 & {\scriptsize\textcolor{softblue}{(-0.009)}}
& 0.511 & {\scriptsize\textcolor{softred}{(-0.042)}}
& 21.984 & {\scriptsize\textcolor{softred}{(-0.091)}}
& 0.169 & 0.112
& 0.735 & {\scriptsize\textcolor{softred}{(+0.003)}}
& 0.735 & {\scriptsize\textcolor{softblue}{(+0.003)}}
& 22.198 & {\scriptsize\textcolor{softblue}{(+0.089)}} \\

FLUX.1 + UCE
& 0.087 & 0.464
& 0.453 & {\scriptsize\textcolor{softblue}{(-0.020)}}
& 0.564 & {\scriptsize\textcolor{softblue}{(+0.011)}}
& 22.189 & {\scriptsize\textcolor{softblue}{(+0.114)}}
& 0.134 & 0.422
& 0.742 & {\scriptsize\textcolor{softred}{(+0.010)}}
& 0.742 & {\scriptsize\textcolor{softblue}{(+0.010)}}
& 22.248 & {\scriptsize\textcolor{softblue}{(+0.139)}} \\

\rowcolor{softpurple}
FLUX.1 + Ours
& \textbf{0.175} & \textbf{0.525}
& \textbf{0.411} & {\scriptsize\textcolor{softblue}{(-0.062)}}
& \textbf{0.573} & {\scriptsize\textcolor{softblue}{(+0.020)}}
& \textbf{22.499} & {\scriptsize\textcolor{softblue}{(+0.424)}}
& \textbf{0.173} & \textbf{0.459}
& \textbf{0.581} & {\scriptsize\textcolor{softblue}{(-0.151)}}
& \textbf{0.781} & {\scriptsize\textcolor{softblue}{(+0.049)}}
& \textbf{23.109} & {\scriptsize\textcolor{softblue}{(+1.000)}} \\

\bottomrule
\end{tabular}
\end{adjustbox}
\caption{Quantitative comparison of concept negation understanding and image quality in different generative models.}
\label{table_3}
\endgroup
\end{table*}

\section{Experiments} \label{section_5}
\begin{figure*}[t]
\centering
\includegraphics[width=\textwidth]{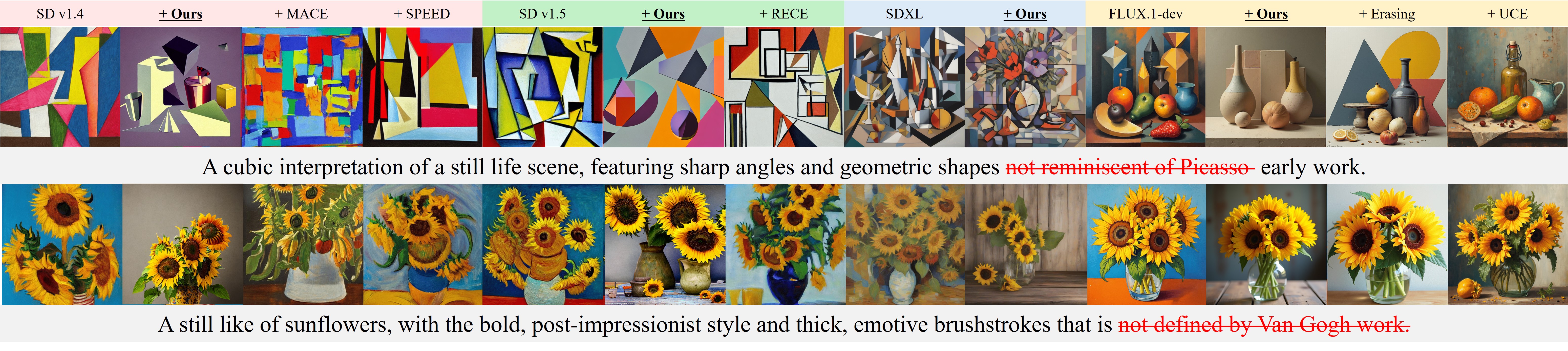}
\caption{Qualitative comparisons of concept-level negation in different diffusion backbones.}
\label{figure_5}
\end{figure*}

\begin{figure}[t]
\centering
\includegraphics[width=\linewidth]{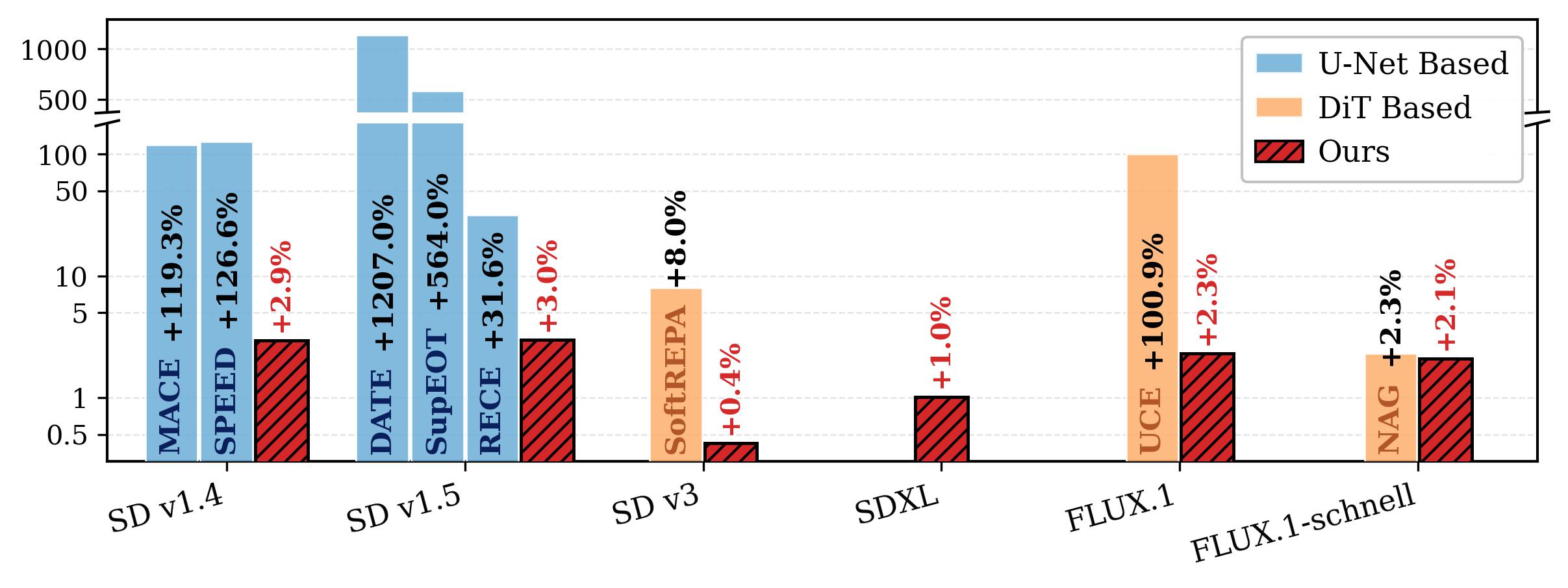}
\caption{Inference-time overhead comparison in a log scale.}
\label{figure_6}
\end{figure}

\subsection{Object Negation} \label{section_4_1}
\methodname{} is applied to diffusion backbones including the U-Net based Stable Diffusion v1.5 (SD v1.5)~\cite{rombach2022high} and Transformer based Rectified Flow models such as Stable Diffusion v3 (SD v3)~\cite{esser2024scaling}, SDXL~\cite{podell2024sdxl}, and FLUX.1~\cite{flux2024}.
Performance is evaluated using TIFA~\cite{hu2023tifa}, a VQA based metric for assessing text-image semantic alignment, together with FID and PickScore~\cite{kirstain2023pick} to evaluate visual quality and the effectiveness of style preservation.

We construct an evaluation set of 480 prompts starting from sentences in the ImageNet validation set.
Using Qwen2.5-Instruct, we transform the original sentences into object-negation prompts containing explicit negation expressions such as 'without' and 'no.'
These prompts are used to evaluate whether each method can selectively exclude the negated object while preserving the remaining semantics of the prompt.

Performance comparisons are conducted against representative baseline methods in different diffusion backbones, including the Stable Diffusion v1.5 base model and its variants DATE~\cite{na2025diffusion} and SupEOT~\cite{li2023get}, the SD v3 base model and its variant SoftREPA~\cite{lee2025aligning}, as well as SDXL and FLUX.1 with its variant NAG~\cite{chen2026normalized}. The quantitative results are summarized in Table~\ref{table_2}. \textcolor{softblue}{Blue} values indicate improvements relative to the corresponding base model, while \textcolor{softred}{Red} values represent performance degradation.

Table~\ref{table_2} presents quantitative comparisons of object negation understanding and image quality. Existing suppression based methods such as SupEOT and NAG improve negation understanding over baseline models, but often struggle to maintain image quality consistently. In contrast, the proposed method effectively improves both negation alignment and generation quality across different diffusion backbones. Under the SD v1.5 setting, \methodname{} achieves the highest mean TIFA score of 0.595 while improving PickScore from 21.245 to 21.815 and reducing FID from 195.207 to 193.091. Similar trends are consistently observed in SD v3, SDXL, and FLUX.1, demonstrating that auxiliary prompts effectively enhance negation understanding while preserving visual quality.

As shown in Fig.~\ref{figure_4}, existing methods tend to follow explicit sentence expressions. In contrast, negative attributes are effectively removed, producing semantically cleaner results, such as an empty cat bowl without residual water and a burger without cheese.

\subsection{Concept Negation}
\label{section_4_2}
For concept negation, we compare training-based methods, including MACE~\cite{lu2024mace} built on Stable Diffusion v1.4 (SD v1.4), UCE~\cite{gandikota2024unified}, and Erasing~\cite{gandikota2023erasing}, with training-free methods such as SPEED~\cite{li2025speed} and RECE~\cite{gong2024reliable}. Experiments are conducted on the artist negation benchmark~\cite{gandikota2024unified}; following CURE~\cite{biswas2025cure}, we select 10 representative artists with 20 prompts per artist, where concept negation targets artist-specific semantic characteristics rather than contextual object co-occurrences. Detailed experimental settings and additional results are provided in Appendix~\ref{section_F_2}.

\begin{table}[t]
\centering
\setlength{\tabcolsep}{4pt} % 열이 늘어났으므로 간격을 살짝 좁힙니다
\renewcommand{\arraystretch}{0.8}
\caption{Result of the user study.}
\label{table_4}
\begin{adjustbox}{width=\linewidth}
\tiny
\setlength{\tabcolsep}{6pt} % 열 간격을 조절하여 조밀하게 만듭니다.
\begin{tabular}{lcccc}
\toprule
\multirow{2}{*}{Method} & \multicolumn{2}{c}{Object} & \multicolumn{2}{c}{Concept} \\
\cmidrule(lr){2-3} \cmidrule(lr){4-5}
& Align. & Vis. & Align. & Vis. \\
\midrule
Base    & 0.115 & 0.117 & 0.125 & 0.123 \\
Compare & 0.347 & 0.370 & 0.265 & 0.307 \\
\rowcolor{softpurple} Ours    & \textbf{0.536} & \textbf{0.512} & \textbf{0.609} & \textbf{0.569} \\
\bottomrule
\end{tabular}
\end{adjustbox}
\end{table}

% [4] 평가방법
% 4-1 % 4-2 % 4-3
We use LPIPS~\cite{zhang2018perceptual} to measure the perceptual difference between images generated by the original model and those generated after concept erasure, where a larger distance indicates stronger modification of the target concept and thus more effective concept removal. TIFA and PickScore follow the same protocol as object negation. For CLIP evaluation, we additionally measure similarity with and without concept terms (CLIP w/ C and CLIP w/o C), since concept terms may artificially increase similarity when the target concept remains in generated images.
% [5] 결과
Fig.~\ref{figure_5} and Table~\ref{table_3} show the results on concept negation. Existing methods often degrade semantic alignment and visual quality. In particular, SPEED and RECE significantly deteriorate both CLIP and PickScore. In contrast, the proposed method consistently preserves semantic alignment and image fidelity. Under the SDXL setting, CLIP improves from 0.415 to 0.494 for Picasso negation and from 0.607 to 0.721 for Van Gogh negation, with consistent PickScore improvements.

\paragraph{Computational Cost.}
Fig.~\ref{figure_6} compares inference time and performance. Since \methodname{} only performs lightweight SVD analysis on a small set of auxiliary prompts, it introduces negligible computational overhead while maintaining stable performance gains. Further analysis is in Appendix~\ref{section_H}.

\paragraph{User Study.}
As shown in Table~\ref{table_4}, \methodname{} achieves the highest human preference scores for both object- and concept-level negation (\textit{Compare} denotes NAG for object negation and Erasing for concept negation).
The evaluation involved 32 participants, with the sample size determined through a power analysis ($d=0.8$, power $=0.8$, $\alpha=0.05$).
Detailed evaluation protocols, score calculation, and statistical analyses are provided in Appendix~\ref{section_I}.

\begin{figure*}[t]
\centering
\includegraphics[width=\textwidth]{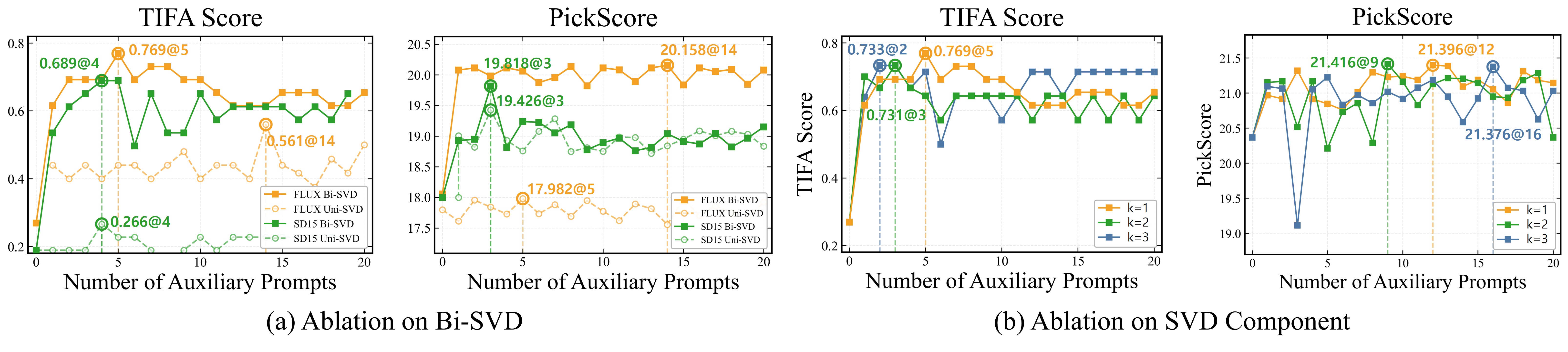}
\caption{Ablation studies on Bi-SVD and semantic basis extraction. (a) Comparison between Uni-SVD and Bi-SVD on FLUX.1 and SD v1.5. (b) Performance analysis with different numbers of SVD components on FLUX.1.}
\label{figure_7}
\end{figure*}

\subsection{Ablation Studies} \label{section_4_3}
% \paragraph{Effect of Bi-SVD}  \label{section_4_3_1}
% Fig.~\ref{figure_7}(a) compares Uni-SVD and Bi-SVD under varying numbers of auxiliary prompts. Both improve over the base model, indicating that semantic bias disentanglement is beneficial for negation-aware generation. However, Bi-SVD shows more stable semantic alignment and visual quality, reaching optimal performance with fewer auxiliary prompts, whereas Uni-SVD improves more slowly. This can be attributed to the bidirectional projection mechanism of Bi-SVD, which further removes residual biased semantic components after alignment to the semantic-preserving subspace (Eq.~\ref{equation_2}), while Uni-SVD relies on one-directional suppression that may insufficiently disentangle bias or remove useful semantic information. Similar trends are observed in PickScore. In SD v1.5, incomplete suppression in Uni-SVD occasionally preserved stylistic information, leading to relatively higher PickScore values, whereas excessive suppression in FLUX.1 sometimes degraded visual fidelity. In contrast, Bi-SVD maintained stable PickScore performance while improving semantic alignment. Additional quantitative results are provided in Appendix~\ref{section_F_1}.

\paragraph{Effect of Bi-SVD.} \label{section_4_3_1}
Fig.~\ref{figure_7}(a) compares Uni-SVD and Bi-SVD under varying numbers of auxiliary prompts. Both improve over the base model, indicating that semantic bias disentanglement is beneficial for negation-aware generation. However, Bi-SVD shows more stable semantic alignment and visual quality, reaching optimal performance with fewer auxiliary prompts, whereas Uni-SVD improves more slowly. This can be attributed to the bidirectional projection mechanism of Bi-SVD, which further removes residual biased semantic components after alignment to the semantic-preserving subspace (Eq.~\ref{equation_2}), while Uni-SVD relies on one-directional suppression that may insufficiently disentangle bias or remove useful semantic information. Similar trends are observed in PickScore. In SD v1.5, incomplete suppression in Uni-SVD occasionally preserves stylistic information, leading to relatively higher PickScore values, whereas excessive suppression in FLUX.1 sometimes degrades visual fidelity. In contrast, Bi-SVD maintains stable PickScore performance while improving semantic alignment.

To further examine the contribution of each branch, Table~\ref{table_5} compares Uni-SVD, the semantic branch only, the style branch only, and the full Bi-SVD. Neither individual branch achieves the performance of the full Bi-SVD across both backbones. In particular, Bi-SVD achieves the highest TIFA and PickScore on both FLUX.1 and SD v1.5, demonstrating that the two branches are complementary for semantic disentanglement and style preservation.

\begin{table}[t]
\centering
\caption{Ablation of individual branches in Bi-SVD in different diffusion backbones.}
\label{table_5}
\small
\begin{tabular}{llcc}
\toprule
Base &SVD Method& \multicolumn{2}{c}{\textbf{Score}} \\
\cmidrule(lr){3-4}
& & TIFA & PickScore \\
\midrule
\multirow{4}{*}{SD1.5}
& Uni-SVD              & 0.510 & 20.940 \\
& Semantic Branch      & 0.425 & 20.900 \\
& Style Branch         & 0.100 & 19.504 \\
& \textbf{Bi-SVD (Ours)} & \textbf{0.648} & \textbf{21.183} \\
\midrule
\multirow{4}{*}{FLUX}
& Uni-SVD              & 0.516 & 21.273 \\
& Semantic Branch      & 0.431 & 21.668 \\
& Style Branch         & 0.416 & 21.535 \\
& \textbf{Bi-SVD (Ours)} & \textbf{0.556} & \textbf{22.750} \\
\bottomrule
\end{tabular}
\end{table}

\paragraph{Effect of SVD Component.}\label{section_4_3_2}
Fig.~\ref{figure_7}(b) presents the performance differences under varying numbers of SVD components ($k$). Larger values of $k$ initially show higher TIFA scores, suggesting stronger negation behavior by capturing additional semantic variations associated with the target concept. However, as the number of auxiliary prompts increases, larger $k$ values exhibit bigger performance fluctuations and gradual degradation. This can be attributed to additional noisy components being introduced into the semantic basis together with useful semantic information. In contrast, $k=1$ consistently achieves more stable performance and the best overall results, as the first singular component mainly captures the dominant semantic direction while minimizing unnecessary noise. Similar trends are also observed in PickScore, where $k=1$ maintains more stable visual quality and stylistic consistency. Additional quantitative results are provided in Appendix~\ref{section_F_4}.

\paragraph{Extraction Method.}\label{section_4_3_3}
Fig.~\ref{figure_8} compares different extraction methods for semantic basis decomposition. Mean aggregation shows limited effectiveness due to insufficient disentanglement of biased semantics, while RPCA tends to produce unstable representations and degraded visual quality. In contrast, SVD more reliably estimates shared semantic structures across auxiliary prompts, leading to more consistent disentanglement behavior and better preservation of image quality. Detailed experimental results and discussion are provided in Appendix~\ref{section_F_3}.

\begin{figure}[t]
\centering
\includegraphics[width=\linewidth]{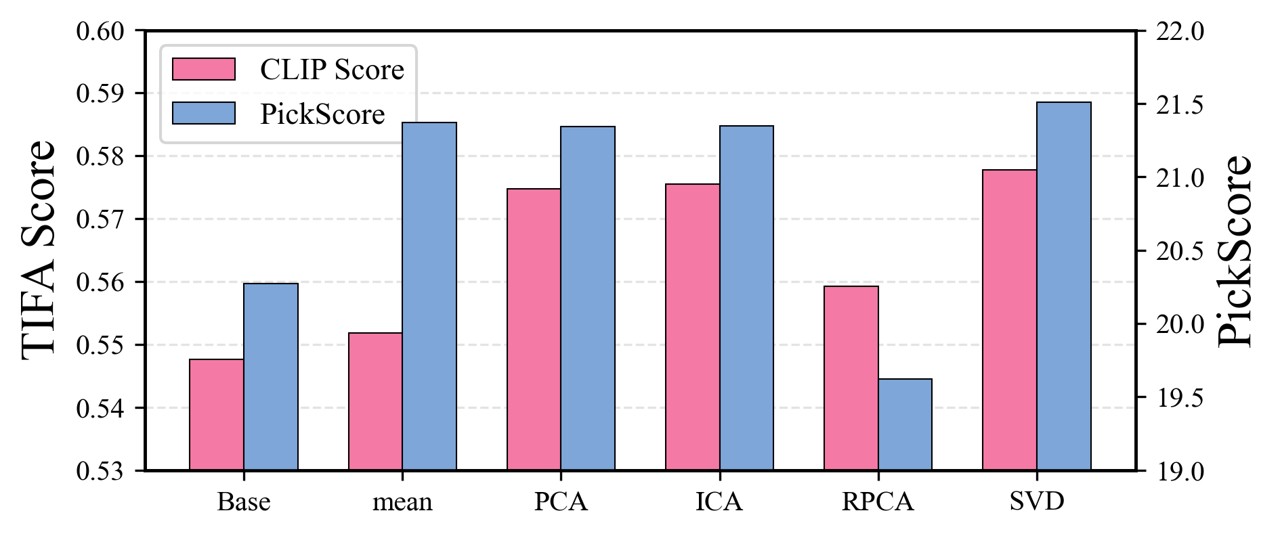}
\caption{Comparison of different extraction methods for semantic basis decomposition. SVD provides the most stable semantic disentanglement while preserving visual quality, whereas mean aggregation shows limited separation and RPCA exhibits unstable behavior.}
\label{figure_8}
\end{figure}

% % 1. LLM 도 동일한거 하나로 모으고
% % 2. Prompt 도 동일한거 하나로 모으자
% % 전체 width 는 살짝 줄이고 
% \begin{table}[t]
% \centering
% \caption{Robustness of auxiliary prompts across different LLMs, prompt types, and selection strategies.}
% \label{table_6}
% \small
% \begin{tabular}{lllcc}
% \toprule
% \textbf{LLM} & \textbf{Prompt} & \textbf{Selection} & \textbf{TIFA} & \textbf{PickScore} \\
% \midrule
% Qwen     & Noisy  & Random       & 0.526 & 21.626 \\
% Qwen     & Noisy  & Top-1        & 0.550 & 21.740 \\
% Qwen     & Manual & Random       & 0.579 & 21.825 \\
% Qwen     & Manual & Top-1 (Ours) & \textbf{0.650} & \textbf{22.088} \\
% \midrule
% DeepSeek & Noisy  & Random       & 0.503 & 21.626 \\
% DeepSeek & Noisy  & Top-1        & 0.612 & 21.750 \\
% DeepSeek & Manual & Random       & 0.604 & 21.974 \\
% DeepSeek & Manual & Top-1 (Ours) & \textbf{0.711} & \textbf{22.148} \\
% \bottomrule
% \end{tabular}
% \end{table}
% Required:
% \usepackage{booktabs}
% \usepackage{multirow}

\begin{table}[t]
\centering
\caption{Robustness of auxiliary prompts across different LLMs, prompt types, and selection strategies.}
\label{table_6}
\small
\setlength{\tabcolsep}{4.2pt}
\begin{tabular}{lllcc}
\toprule
\textbf{LLM} & \textbf{Prompt} & \textbf{Selection} & \textbf{TIFA} & \textbf{PickScore} \\
\midrule

\multirow{4}{*}{Qwen}
& \multirow{2}{*}{Noisy}
& Random       & 0.526 & 21.626 \\
&
& Top-1        & 0.550 & 21.740 \\
\cmidrule(lr){2-5}
&
\multirow{2}{*}{Manual}
& Random       & 0.579 & 21.825 \\
&
& Top-1 (Ours) & \textbf{0.650} & \textbf{22.088} \\

\midrule

\multirow{4}{*}{DeepSeek}
& \multirow{2}{*}{Noisy}
& Random       & 0.503 & 21.626 \\
&
& Top-1        & 0.612 & 21.750 \\
\cmidrule(lr){2-5}
&
\multirow{2}{*}{Manual}
& Random       & 0.604 & 21.974 \\
&
& Top-1 (Ours) & \textbf{0.711} & \textbf{22.148} \\

\bottomrule
\end{tabular}
\end{table}
\paragraph{Robustness of Auxiliary Prompts.} \label{section_4_3_4}
We evaluate the robustness of auxiliary prompt construction across different LLMs (Qwen and DeepSeek), prompt types (Noisy and Manual), and selection strategies (Random and Top-1). As shown in Table~\ref{table_6}, Manual prompts with Top-1 selection consistently achieve the best performance for both LLMs, reaching TIFA scores of 0.650 and 0.711 with Qwen and DeepSeek~\cite{liu2024deepseek}, respectively. Top-1 selection also improves over Random selection in all prompt settings, demonstrating that selecting semantically relevant auxiliary prompts reduces sensitivity to prompt variation. Moreover, the consistent trends across Qwen and DeepSeek indicate that \methodname{} is not subject to a specific LLM.

\section{Concluding Remarks} \label{section_5}
In this paper, we have proposed a training-free negation-aware generation method that disentangles entangled text representations through a bidirectional SVD-based decomposition-recomposition process between semantic-preserving and bias-suppressing subspaces. The proposed method not only suppresses undesirable biased semantics but also actively guides the representation toward the desired semantic direction while preserving the original semantic context and visual style of the input prompt. However, the proposed method still has limitations. Since semantic subspaces are estimated using auxiliary prompts, performance remains sensitive to prompt quality. Extending the method to more complex linguistic phenomena, such as relational and logical reasoning, remains an important direction for future work.

\section*{Acknowledgements} \label{sec_6}
This work was supported by IITP grant funded by the Korea government (MSIT) (No. RS-2020-II201361, Artificial Intelligence Graduate School Program (Yonsei University); No. RS-2022-II220113, Developing a Sustainable Collaborative Multi-modal Lifelong Learning Framework), and Air Force Defense Research Sciences Program funded by Air Force Office of Scientific Research.

%% file: section/2_supplementary.tex
\clearpage
\appendix

\section{Preliminaries}
\label{section_A}

Our method is built upon text-conditioned diffusion models. In text-conditioned diffusion models, an input prompt is transformed into latent representations through a text encoder $\tau_{\phi}$, and the generation process is guided by these text representations as conditioning information.

\paragraph{U-Net Based.}
Earlier U-Net based diffusion models~\cite{ho2020denoising,rombach2022high} mainly relied on a single text encoder for conditioning. Given an input prompt $p$, the prompt is transformed into a text embedding representation $x$, which is injected into cross-attention layers to guide image generation. Typically, CLIP~\cite{radford2021learning} is employed as the text encoder, allowing image features to selectively attend to semantic information from the text representation and enabling text-guided synthesis.

\paragraph{DiT Based.}
Recent transformer based diffusion models, including FLUX.1~\cite{labs2025flux1kontextflowmatching}, SD v3~\cite{esser2024scaling}, and SDXL~\cite{podell2024sdxl}, adopt multiple text encoders to simultaneously capture semantic information at different levels. Instead of relying on a single text representation, these models utilize complementary embeddings obtained from different encoders. Specifically, CLIP mainly captures global semantic and contextual information and produces a pooled representation $p_{\text{pool}}$, which is used in feature modulation to scale and shift intermediate representations before attention operations. In parallel, T5 provides token-wise contextual representations $p_{\text{emb}}$, preserving detailed prompt semantics and compositional information. The complete conditioning set $\mathcal{C}={p_{\text{pool}}, p_{\text{emb}}}$ is integrated into transformer blocks, where text and image features interact through independent query ($Q$), key ($K$), and value ($V$) transformations.

Such stronger text conditioning mechanisms significantly improve prompt fidelity by enhancing semantic alignment between text and generated images. However, despite these stronger conditioning mechanisms, generated images frequently retain undesired concepts under explicit negation prompts. In this work, we address this limitation by performing text embedding disentanglement directly within the text representation space prior to diffusion sampling.

\vfill\break

\section{Motivation Details}
\label{section_B}

\subsection{Profession Text Embedding Decomposition}
\label{section_B_1}

\begin{figure}[!h]
\centering
\includegraphics[width=\linewidth]{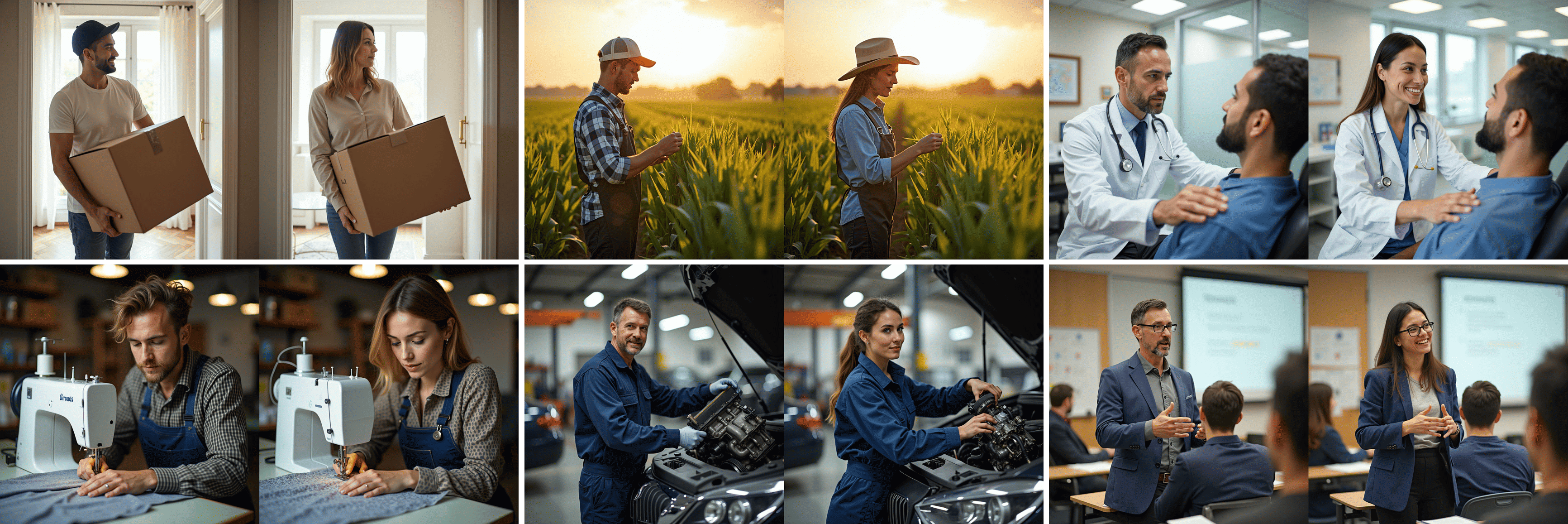}
\caption{Illustration of semantic entanglement of gender in profession-related prompt embeddings.}
\label{figure_9}
\end{figure}

Diffusion models~\cite{flux2024,podell2024sdxl} frequently exhibit semantic entanglement with demographic attributes in profession-related generation scenarios. For example, under profession-oriented prompts such as ``a dentist examining a patient,'' diffusion models often generate male individuals with a high frequency. This indicates that the profession embedding $\mathbf{x}_{\mathrm{dentist}}$ implicitly contains gender-related semantic priors rather than representing only occupation-specific semantics.

When the bias direction can be explicitly defined by a single reference vector (e.g., $\mathbf{x}_{\mathrm{man}}$), the entangled bias component within a target embedding can be isolated via vector projection:
\begin{equation}
\mathbf{x}^{\mathrm{bias}} = \mathcal{P}_{\mathbf{x}_{\mathrm{man}}}(\mathbf{x}_{\mathrm{dentist}}) 
\label{eq:gender_proj}
\end{equation}

We assume that the raw profession embedding (e.g., $\mathbf{x}_{\mathrm{dentist}}$) is linearly composed of its core occupational attributes (such as visual style or attire, denoted as $\mathbf{x}_{\mathrm{style}}$) and the entangled demographic bias ($\mathbf{x}_{\mathrm{bias}}$). This relationship can be formulated as follows:
\begin{equation}
\mathbf{x}_{\mathrm{dentist}} = \mathbf{x}_{\mathrm{style}} + \mathbf{x}^{\mathrm{bias}}.
\label{eq:gender_decompose}
\end{equation}
By exploiting this geometric formulation, the unbiased, profession-specific semantic component $\mathbf{x}_{\mathrm{style}}$ can be effectively extracted through vector subtraction:
\begin{equation}
\mathbf{x}^{\mathrm{style}} = \mathbf{x}_{\mathrm{dentist}} - \mathbf{x}_{\mathrm{bias}}.
\label{eq:gender_orth}
\end{equation}

Furthermore, we can synthesize a counterfactual embedding by substituting the original biased axis with an alternative demographic direction (e.g., $\mathbf{x}_{\mathrm{woman}}$):
\begin{equation}
\mathbf{x}'_{\mathrm{dentist}} = \mathbf{x}^{\mathrm{style}} + \mathbf{x}_{\mathrm{woman}}.
\label{eq:female_dentist}
\end{equation}
The qualitative results generated using the gender-modified counterfactual embedding are shown in Fig.~\ref{figure_9}.

\paragraph{General Token.}
However, extending this approach to general object-level concepts is challenging because semantic bias cannot be represented by a single explicit direction such as $x_{\mathrm{man}}$ or $x_{\mathrm{woman}}$. In practice, semantic bias is distributed across multiple correlated patterns. For example, in the prompt ``bakery with no bread,'' bread-related semantics are entangled with associated concepts such as cakes, trays, and food structures. Therefore, instead of relying on a single semantic direction, we construct semantic subspaces using multiple auxiliary prompts. Shared semantic components are extracted from related prompt embeddings, enabling more stable estimation of semantic information for negation-aware embedding construction.

\subsection{Padding Analysis}
\label{section_B_2}

\begin{table}[t]
\centering
\caption{Similarity analysis between text embeddings and padding embeddings across different architectures.}
\label{table_7}
\tiny
\setlength{\tabcolsep}{8pt}
\renewcommand{\arraystretch}{1.12}
\begin{adjustbox}{width=\linewidth}
\begin{tabular}{lcc}
\toprule
Arch & Max Similarity & Mean Similarity \\
\midrule
DiT & 0.142 & 0.024 \\
U-Net & 0.992 & 0.403 \\
\bottomrule
\end{tabular}
\end{adjustbox}
\end{table}

SupEOT performs negation-aware generation by suppressing semantic components within padding embeddings. The method assumes that padding regions in text embeddings still contain meaningful semantic information, and that removing such components can improve negation performance. However, unlike U-Net architectures, recent DiT models exhibit relatively cleaner padding embeddings with minimal semantic information. As shown in Tab.~\ref{table_7}, U-Net models demonstrate relatively high similarity between text and padding embeddings, suggesting that padding regions retain meaningful semantic content. In contrast, DiT architectures show extremely low similarity between text and padding embeddings, indicating that padding regions are largely disentangled from text semantics. These observations suggest that the padding embedding manipulation strategy proposed in SupEOT may not generalize effectively across different architectures.

\vfill\break
\section{Theoretical Analysis} \label{section_C}

This section formalizes why Uni-SVD fails to preserve stylistic information under negation, and how the proposed Bi-SVD overcomes this failure through cross-subspace recovery.

\begin{definition}[Subspace Structure]
\label{def:subspace}
Let $\mathcal{P}_{\mathcal{S}}$ and $\mathcal{P}_{\mathcal{B}}$ denote the orthogonal projectors onto the semantic-preserving subspace $\mathcal{S}$ and the bias subspace $\mathcal{B}$, respectively. The principal angles $\theta_{1}\!\le\!\cdots\!\le\!\theta_{r}$ between $\mathcal{S}$ and $\mathcal{B}$ satisfy $\cos\theta_{i}=\sigma_{i}({V_S^{(k)}}^{\!\top}\!V_B^{(k)})$, where $\sigma_i(\cdot)$ denotes the $i$-th largest singular value. A smaller $\theta_1$ indicates stronger entanglement, and $\theta_1$ is non-increasing in $k$ as larger subspaces admit more overlapping directions.
\end{definition}

\begin{definition}[Embedding Decomposition]
\label{def:decomp}
A negation prompt embedding $x_n\in\mathbb{R}^{d}$ is decomposed as
\begin{equation}
x_n = x_n^{\mathrm{sem}} + x_n^{\mathrm{bias}} + x_n^{\mathrm{style},B} + x_n^{\mathrm{style},S},
\label{eq:decomp}
\end{equation}
under the following assumptions:
\begin{itemize}[itemsep=2pt]
\item[\textbf{(A1)}]
  $x_n^{\mathrm{sem}}\!\in\!\mathcal{S}$ with
  $\mathcal{P}_{\mathcal{B}}(x_n^{\mathrm{sem}})\!=\!0$.
\item[\textbf{(A2)}]
  $x_n^{\mathrm{bias}}\!\in\!\mathcal{B}$ with $\mathcal{P}_{\mathcal{S}}(x_n^{\mathrm{bias}})\!=\!0$.
\item[\textbf{(A3)}]
  $x_n^{\mathrm{style},B}\!\in\!\mathcal{B}$ with $\mathcal{P}_{\mathcal{S}}(x_n^{\mathrm{style},B})\!=\!\eta_b\!\ne\!0$, representing bias-entangled style.
\item[\textbf{(A4)}]
  $x_n^{\mathrm{style},S}\!\in\!\mathcal{S}$ with $\mathcal{P}_{\mathcal{B}}(x_n^{\mathrm{style},S})\!=\!\eta_s\!\ne\!0$, representing semantic-aligned style.
\end{itemize}
\end{definition}

\begin{definition}[Operators]
\label{def:ops}
\begin{align*}
\mathcal{T}_{\mathrm{uni}} &= \mathbb{I}- \mathcal{P}_{\mathcal{B}}, \\
\mathcal{P}^{\mathrm{sem}} &= (\mathbb{I}\!-\!\mathcal{P}_{\mathcal{B}})\,\mathcal{P}_{\mathcal{S}}, \\
\mathcal{P}^{\mathrm{style}} &= \mathcal{P}_{\mathcal{S}}\,(\mathbb{I}\!-\!\mathcal{P}_{\mathcal{B}}), \\
\mathcal{T}_{\mathrm{bi}}
  &= \mathcal{P}^{\mathrm{sem}} + \mathcal{P}^{\mathrm{style}} \notag\\
  &= 2\mathcal{P}_{\mathcal{S}} 
   - \mathcal{P}_{\mathcal{B}}\mathcal{P}_{\mathcal{S}} 
   - \mathcal{P}_{\mathcal{S}}\mathcal{P}_{\mathcal{B}}.
   \label{eq:T_bi}
\end{align*}
\end{definition}

We define $\Delta_{\mathrm{style}}\!=\!\mathcal{P}_{\mathcal{S}}\mathcal{P}^{\mathrm{sem}}(x_n^{\mathrm{style},B})$ 
and $\Delta_{\mathrm{bias}}\!=\!\mathcal{P}_{\mathcal{B}}\mathcal{P}^{\mathrm{style}}(x_n^{\mathrm{style},S})$ 
to denote the recovered style and residual bias terms, respectively.

\begin{figure*}[t]
    \centering
    \includegraphics[width=\linewidth]{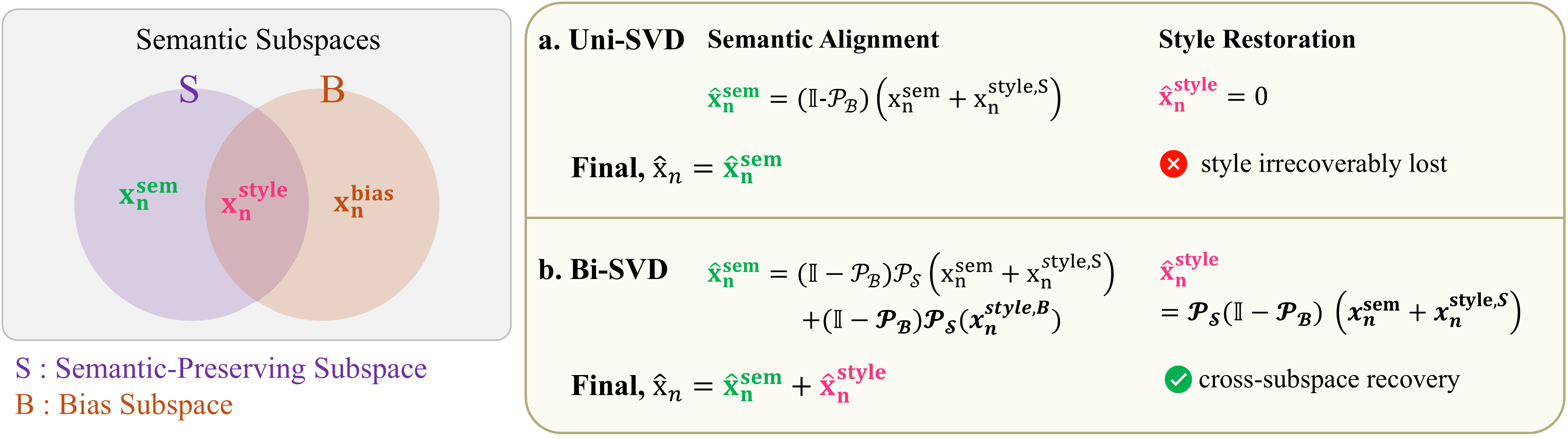}
    \caption{Comparison between Uni-SVD and Bi-SVD for negation embedding decomposition. Unlike Uni-SVD, Bi-SVD selectively preserves contextual semantics while suppressing bias-related representations.}
    \label{figure_10}
\end{figure*}

% ────────────────────────────────────────────────────────────
\subsection{Limitations of Uni-SVD}
% ────────────────────────────────────────────────────────────

\newtheorem*{theorem_uni}{Theorem~\ref{theorem_1}}
\begin{theorem_uni}[Uni-SVD: Style Degradation]
\label{thm:uni}
Under \textbf{(A1)}--\textbf{(A4)}, $\mathcal{T}_{\mathrm{uni}}=\mathbb{I}-\mathcal{P}_{\mathcal{B}}$ yields:
\begin{equation}
\mathcal{T}_{\mathrm{uni}}(x_n^{\mathrm{bias}}) = 0,
\qquad
\mathcal{T}_{\mathrm{uni}}(x_n^{\mathrm{style},B}) = 0.
\label{eq:uni_results}
\end{equation}
\end{theorem_uni}

\begin{proof}
\begin{align*}
&\text{For } x_n^{\mathrm{bias}}\text{: by \textbf{(A2)}, }
 \mathcal{P}_{\mathcal{B}}(x_n^{\mathrm{bias}})=x_n^{\mathrm{bias}}\text{, thus}\\
&\mathcal{T}_{\mathrm{uni}}(x_n^{\mathrm{bias}})
= (\mathbb{I}-\mathcal{P}_{\mathcal{B}})(x_n^{\mathrm{bias}}) = 0 \\[6pt]
&\text{For } x_n^{\mathrm{style,B}}\text{: by \textbf{(A3)}, }
 \mathcal{P}_{\mathcal{B}}(x_n^{\mathrm{style,B}})=x_n^{\mathrm{style,B}}\text{, thus}\\
&\mathcal{T}_{\mathrm{uni}}(x_n^{\mathrm{style,B}})
= (\mathbb{I}-\mathcal{P}_{\mathcal{B}})(x_n^{\mathrm{style,B}}) = 0
\end{align*}
\end{proof}

Because $\mathcal{T}_{\mathrm{uni}}$ treats the entire subspace $\mathcal{B}$ uniformly, it cannot distinguish bias from style entangled within $\mathcal{B}$. Consequently, the stylistic component $x_n^{\mathrm{style}}$ is irrecoverably lost together with the bias, directly causing the visual fidelity collapse observed in Fig.~\ref{figure_2}(c). %observed in Fig.~\ref{figure_1}(a).
Stronger entanglement (larger $\|x_n^{\mathrm{style}}\|$) intensifies this loss,
and no choice of suppression strength within the uni-directional framework
can simultaneously remove bias and preserve $x_n^{\mathrm{style}}$.

% ────────────────────────────────────────────────────────────
\subsection{Properties of Bi-SVD}
% ────────────────────────────────────────────────────────────

\newtheorem*{theorem_bi}{Theorem~\ref{theorem_2}}
\begin{theorem_bi}[Bi-SVD: Bounded Bias and Style Recovery]
\label{thm:bi}
Under \textbf{(A1)}--\textbf{(A3)}, the bidirectional operator $\mathcal{T}_{\mathrm{bi}}$
satisfies:

\noindent\textbf{(a)} Bias suppression:
\begin{equation}
\mathcal{T}_{\mathrm{bi}}(x_n^{\mathrm{bias}}) = 0.
\label{eq:bi_bias}
\end{equation}

\noindent\textbf{(b)} Bounded bias residual:
\begin{equation}
\mathcal{P}_{\mathcal{B}}(\mathcal{T}_{\mathrm{bi}}\,(x_n)) = \Delta_{\mathrm{bias}},
\label{eq:bi_residual}
\end{equation}
where $\|\Delta_{\mathrm{bias}}\| \le \cos\theta_{1}\,\|(\mathbb{I}\!-\!\mathcal{P}_{\mathcal{B}})\,x_n\|$.

The residual $\Delta_{\mathrm{bias}}$ arises solely from $\eta_s$ through
a three-step attenuation chain
$\eta_s \!\to\! \mathcal{P}_{\mathcal{S}}(\eta_s) \!\to\! \mathcal{P}_{\mathcal{B}}\mathcal{P}_{\mathcal{S}}(\eta_s)$,
each step reducing energy by $\cos\theta_1$.
It becomes negligible when auxiliary prompts yield well-separated subspaces,
and vanishes exactly when $\eta_s=0$.

\noindent\textbf{(c)} Cross-subspace style recovery:
\begin{equation}
\mathcal{T}_{\mathrm{bi}}(x_n^{\mathrm{style},B}) = \mathcal{P}^{\mathrm{sem}}(x_n^{\mathrm{style},B}) \neq 0,
\label{eq:bi_recovery_B}
\end{equation}
\begin{equation}
\mathcal{P}^{\mathrm{style}}(x_n^{\mathrm{style},S}) \neq 0.
\label{eq:bi_recovery_S}
\end{equation}

\noindent\textbf{(d)} Additional semantic content vs.\ Uni-SVD:
\begin{equation}
\mathcal{P}_{\mathcal{S}}(\mathcal{T}_{\mathrm{bi}}(x_n))
= 2\,\mathcal{P}_{\mathcal{S}}(\mathcal{T}_{\mathrm{uni}}(x_n)) + \Delta_{\mathrm{style}}.
\label{eq:bi_semantic}
\end{equation}
While $\mathcal{T}_{\mathrm{uni}}$ unconditionally annihilates $x_n^{\mathrm{style}}$
(Theorem 1), $\mathcal{T}_{\mathrm{bi}}$ rescues its semantic-aligned content
via cross-subspace recovery.
The total preserved semantic content of Bi-SVD strictly exceeds that of Uni-SVD
by $\Delta_{\mathrm{style}}$, which represents bias-free stylistic knowledge
extracted from $\mathcal{B}$ that is structurally inaccessible to any
uni-directional operator.
\end{theorem_bi}

\begin{proof}
\noindent\textbf{(a)} By \textbf{(A2)}, 
$\mathcal{P}_{\mathcal{S}}(x_n^{\mathrm{bias}})\!=\!0$ 
and $(\mathbb{I}-\mathcal{P}_{\mathcal{B}})(x_n^{\mathrm{bias}})\!=\!0$. Thus:
\begin{align*}
\mathcal{P}^{\mathrm{sem}}(x_n^{\mathrm{bias}}) 
&= (\mathbb{I}-\mathcal{P}_{\mathcal{B}})\mathcal{P}_{\mathcal{S}}(x_n^{\mathrm{bias}}) = 0 \\
\mathcal{P}^{\mathrm{style}}(x_n^{\mathrm{bias}}) 
&= \mathcal{P}_{\mathcal{S}}(\mathbb{I}-\mathcal{P}_{\mathcal{B}})(x_n^{\mathrm{bias}}) = 0 \\
\therefore\,\mathcal{T}_{\mathrm{bi}}(x_n^{\mathrm{bias}}) 
&= \mathcal{P}^{\mathrm{sem}}(x_n^{\mathrm{bias}}) 
 + \mathcal{P}^{\mathrm{style}}(x_n^{\mathrm{bias}}) = 0
\end{align*}

\noindent\textbf{(c)} For $x_n^{\mathrm{style,B}}$: by \textbf{(A3)}, 
$\mathcal{P}_{\mathcal{S}}(x_n^{\mathrm{style,B}})\!=\!\eta_b$ and $(\mathbb{I}-\mathcal{P}_{\mathcal{B}})(x_n^{\mathrm{style,B}})\!=\!0$. Thus:
\begin{align*}
\mathcal{P}^{\mathrm{style}}(x_n^{\mathrm{style,B}}) 
&= \mathcal{P}_{\mathcal{S}}(\mathbb{I}-\mathcal{P}_{\mathcal{B}})(x_n^{\mathrm{style,B}}) = 0 \\
\mathcal{P}^{\mathrm{sem}}(x_n^{\mathrm{style,B}}) 
&= (\mathbb{I}-\mathcal{P}_{\mathcal{B}})\mathcal{P}_{\mathcal{S}}(x_n^{\mathrm{style,B}}) \\
&= (\mathbb{I}-\mathcal{P}_{\mathcal{B}})\,\eta_b \neq 0\\
\therefore\,\mathcal{T}_{\mathrm{bi}}(x_n^{\mathrm{style,B}}) 
&= (\mathbb{I}-\mathcal{P}_{\mathcal{B}})\,\eta_b \neq 0
\end{align*}

\noindent For $x_n^{\mathrm{style,S}}$: by \textbf{(A4)}, 
$\mathcal{P}_{\mathcal{B}}(x_n^{\mathrm{style,S}})\!=\!\eta_s$ and 
$\mathcal{P}_{\mathcal{S}}(x_n^{\mathrm{style,S}})\!=\!x_n^{\mathrm{style,S}}$. Thus:
\begin{align*}
% \mathcal{P}^{\mathrm{sem}}(x_n^{\mathrm{style,S}}) 
% &= (\mathbb{I}-\mathcal{P}_{\mathcal{B}})\mathcal{P}_{\mathcal{S}}(x_n^{\mathrm{style,S}}) \\
% &= (\mathbb{I}-\mathcal{P}_{\mathcal{B}})\,x_n^{\mathrm{style,S}} 
%  = x_n^{\mathrm{style,S}} - \eta_s \\
\mathcal{P}^{\mathrm{style}}(x_n^{\mathrm{style,S}}) 
&= \mathcal{P}_{\mathcal{S}}(\mathbb{I}-\mathcal{P}_{\mathcal{B}})(x_n^{\mathrm{style,S}}) \\
&= \mathcal{P}_{\mathcal{S}}(x_n^{\mathrm{style,S}}-\eta_s) \\
&= x_n^{\mathrm{style,S}} - \mathcal{P}_{\mathcal{S}}(\eta_s) \neq 0
\end{align*}

\noindent\textbf{(b)} Since $\mathcal{P}_{\mathcal{B}}(\mathbb{I}-\mathcal{P}_{\mathcal{B}})=0$,
we have $\mathcal{P}_{\mathcal{B}}\mathcal{P}^{\mathrm{sem}}=0$. Thus:
\vspace{-3pt}
\begin{align*}
&\mathcal{P}_{\mathcal{B}}(\mathcal{T}_{\mathrm{bi}}(x_n)) \\
&=\!\mathcal{P}_{\mathcal{B}}(\mathcal{T}_{\mathrm{bi}}(x_n^{\mathrm{style,S}}\!+\!x_n^{\mathrm{style,B}}))
  \quad \text{(by Thm.~2a, (A1))} \\
&=\!\mathcal{P}_{\mathcal{B}}(\mathcal{T}_{\mathrm{bi}}(x_n^{\mathrm{style,S}})
  \!+\!\mathcal{P}^{\mathrm{sem}}(x_n^{\mathrm{style,B}}))
  \, \text{(by Thm.~2c)} \\
&=\!\mathcal{P}_{\mathcal{B}}(\mathcal{P}^{\mathrm{style}}(x_n^{\mathrm{style,S}}))
  \quad (\text{since } \mathcal{P}_{\mathcal{B}}\mathcal{P}^{\mathrm{sem}}=0) \\
&=\!\Delta_{\mathrm{bias}}.
\end{align*}
The bound follows from $\|\mathcal{P}_{\mathcal{B}}\mathcal{P}_{\mathcal{S}}\|_{\mathrm{op}}=\cos\theta_1$.

\noindent\textbf{(d)} First, by Theorem~1, 
$\mathcal{T}_{\mathrm{uni}}(x_n^{\mathrm{bias}})\!=\!\mathcal{T}_{\mathrm{uni}}(x_n^{\mathrm{style,B}})\!=\!0$:
\begin{align*}
\mathcal{P}_{\mathcal{S}}(\mathcal{T}_{\mathrm{uni}}(x_n))
&= \mathcal{P}^{\mathrm{style}}(x_n^{\mathrm{sem}}) 
 + \mathcal{P}^{\mathrm{style}}(x_n^{\mathrm{style,S}})
\end{align*}
For $\mathcal{T}_{\mathrm{bi}}$, applying 
$\mathcal{T}_{\mathrm{bi}} = 2\mathcal{P}_{\mathcal{S}} 
- \mathcal{P}_{\mathcal{B}}\mathcal{P}_{\mathcal{S}} 
- \mathcal{P}_{\mathcal{S}}\mathcal{P}_{\mathcal{B}}$:
\begin{align*}
\mathcal{P}_{\mathcal{S}}(\mathcal{T}_{\mathrm{bi}}(x_n^{\mathrm{sem}}))
&= 2\mathcal{P}_{\mathcal{S}}x_n^{\mathrm{sem}} 
 - 2\mathcal{P}_{\mathcal{S}}\mathcal{P}_{\mathcal{B}}(x_n^{\mathrm{sem}}) \\
&= 2\mathcal{P}^{\mathrm{style}}(x_n^{\mathrm{sem}})
\end{align*}
\begin{align*}
\mathcal{P}_{\mathcal{S}}(\mathcal{T}_{\mathrm{bi}}(x_n^{\mathrm{style,S}}))
&= \mathcal{P}_{\mathcal{S}}(2x_n^{\mathrm{style,S}} 
 - \eta_s - \mathcal{P}_{\mathcal{S}}(\eta_s)) \\
&= 2\mathcal{P}^{\mathrm{style}}(x_n^{\mathrm{style,S}})
\end{align*}
\begin{align*}
&\mathcal{P}_{\mathcal{S}}(\mathcal{T}_{\mathrm{bi}}(x_n^{\mathrm{style,B}})) \\
&= \mathcal{P}_{\mathcal{S}}\mathcal{P}^{\mathrm{sem}}(x_n^{\mathrm{style,B}})
 + \mathcal{P}_{\mathcal{S}}
   \underbrace{(\mathbb{I}-\mathcal{P}_{\mathcal{B}})(x_n^{\mathrm{style,B}})}_{=\,0\ \text{(by (A3))}} \\
&= \Delta_{\mathrm{style}}
\end{align*}
\noindent Combining with $\mathcal{P}_{\mathcal{S}}(\mathcal{T}_{\mathrm{bi}}(x_n^{\mathrm{bias}}))=0$
(by Theorem~2a), we obtain:
\[
\mathcal{P}_{\mathcal{S}}(\mathcal{T}_{\mathrm{bi}}(x_n))
= 2\,\mathcal{P}_{\mathcal{S}}(\mathcal{T}_{\mathrm{uni}}(x_n)) + \Delta_{\mathrm{style}}.
\]
\end{proof}

\vfill\break

\section{Method Detail}
\label{section_D}

\subsection{Auxiliary Prompts}
\label{section_D_1}

Auxiliary prompts are generated using Qwen2.5-Instruct~\cite{bai2025qwen25vl}. Given a negation prompt $p_n$, we construct two auxiliary prompt sets: a biased semantic set $A_B$ that strongly represents the target concept and a semantic set $A_S$ that preserves the original semantics while avoiding explicit emphasis on the target concept.

To improve prompt quality and semantic consistency, we employ Sentence-BERT ~\cite{reimers2019sentencebert} to compute sentence-level semantic similarities between generated prompts and target semantic concepts.
In each round, $C=20$ candidate prompts are generated and the one maximizing semantic similarity to its corresponding target concept is selected. In addition, prompt length is constrained to maintain concise and visually descriptive representations. Detailed LLM instructions and prompt selection strategy are summarized in Fig.~\ref{figure_instruction} and Alg.~\ref{algorithm_1}, respectively.

\begin{algorithm}[h]
\caption{Iterative Auxiliary Prompt Selection Strategy}
\label{algorithm_1}
%\normalsize
\small
\begin{algorithmic}[1]
\Require Negation prompt $p_n$, biased concept $p_b$, preserved semantic concept $p_s$, number of rounds $N$, candidates per round $C$
\Ensure Biased prompt set $A_B$, Semantic prompt set $A_S$
\vspace{0pt}
\State Initialize prompt sets: $A_B \gets \emptyset$, $A_S \gets \emptyset$.
\For{$i = 1$ to $N$}
    \State Generate $C$ candidate prompts for round $i$ via LLM:
    \[
    A_B^i = \{a_b^{i,1}, a_b^{i,2}, \dots, a_b^{i,C}\} \sim \text{LLM}(p_b)
    \]
    \[
    A_S^i = \{a_s^{i,1}, a_s^{i,2}, \dots, a_s^{i,C}\} \sim \text{LLM}(p_s)
    \]
    \State Select the optimal prompt maximizing semantic similarity via text encoder $\mathcal{E}$:
    \[
    a_b^i = \underset{a \in A_B^i}{\arg\max} \, \text{Sim}(\mathcal{E}(a), \mathcal{E}(p_b))
    \]
    \[
    a_s^i = \underset{a \in A_S^i}{\arg\max} \, \text{Sim}(\mathcal{E}(a), \mathcal{E}(p_s))
    \]
    \State Accumulate selected prompts: $A_B \gets A_B \cup \{a_b^i\}$, \quad $A_S \gets A_S \cup \{a_s^i\}$
\EndFor
\vspace{1pt}
\State \Return $A_B, A_S$
\end{algorithmic}
\end{algorithm}

\begin{figure}[!h]
% \vspace{-125pt}
\centering
\begin{tcolorbox}[
    colback=boxbg,
    colframe=boxframe,
    arc=2mm,
    boxrule=0.8pt,
    left=6pt,
    right=6pt,
    top=6pt,
    bottom=6pt,
    width=\linewidth
]
\small\sffamily\raggedright
{\bfseries \textcolor{darkgray}{\underline{Instruction}}}\\
\vspace{2pt}

You are a model responsible for constructing auxiliary prompts for semantic subspace decomposition. 
Given a negation and semantic concept, generate two types of auxiliary prompt sets. \\
%%%%%%%%%%%%%%%%%%%%%%%%%%%%%%%%%%%%%%%%%%%%%%%%%%%%%%%%%%%%%%%%%%%%%%%%%%%%%%%%%%%%%%%%%%%%%%%%%%%%%%%%%%%%%%%%
\vspace{12pt}
%%%%%%%%%%%%%%%%%%%%%%%%%%%%%%%%%%%%%%%%%%%%%%%%%%%%%%%%%%%%%%%%%%%%%%%%%%%%%%%%%%%%%%%%%%%%%%%%%%%%%%%%%%%%%%%%
\vspace{6pt}
{\bfseries \textcolor{darkgray}{\underline{Example Input}}}\\
\vspace{2pt}
\textcolor{orange!85!black}{\textbf{[BIASED]}}\\
bread

\vspace{4pt}

\textcolor{green!70!black}{\textbf{[SEMANTIC]}}\\
an empty shop interior with clean shelves
%%%%%%%%%%%%%%%%%%%%%%%%%%%%%%%%%%%%%%%%%%%%%%%%%%%%%%%%%%%%%%%%%%%%%%%%%%%%%%%%%%%%%%%%%%%%%%%%%%%%%%%%%%%%%%%%
\vspace{12pt}
%%%%%%%%%%%%%%%%%%%%%%%%%%%%%%%%%%%%%%%%%%%%%%%%%%%%%%%%%%%%%%%%%%%%%%%%%%%%%%%%%%%%%%%%%%%%%%%%%%%%%%%%%%%%%%%%
\vspace{6pt}

{\bfseries \textcolor{darkgray}{\underline{Definition}}}\\
\vspace{2pt}
\textcolor{orange!85!black}{\textbf{[BIASED]}}\\
Prompts that strongly contain or imply the target negation concept. 
These prompts are used to construct the bias subspace $\mathcal{B}$.

\vspace{4pt}

\textcolor{green!70!black}{\textbf{[SEMANTIC]}}\\
Prompts that avoid the target negation concept while preserving the original semantic. 
These prompts are used to construct the semantic-preserving subspace $\mathcal{S}$.
%%%%%%%%%%%%%%%%%%%%%%%%%%%%%%%%%%%%%%%%%%%%%%%%%%%%%%%%%%%%%%%%%%%%%%%%%%%%%%%%%%%%%%%%%%%%%%%%%%%%%%%%%%%%%%%%
\vspace{12pt}
%%%%%%%%%%%%%%%%%%%%%%%%%%%%%%%%%%%%%%%%%%%%%%%%%%%%%%%%%%%%%%%%%%%%%%%%%%%%%%%%%%%%%%%%%%%%%%%%%%%%%%%%%%%%%%%%
\vspace{6pt}
{\bfseries \textcolor{darkgray}{\underline{Generation Guidelines}}}\\
\vspace{2pt}
\textcolor{orange!85!black}{\textbf{[BIASED]}}\\
- Strongly activate the target negation concept.\\
- Describe scenes where the base semantic naturally contains the target negation concept.\\
- Use concrete and visually descriptive language.\\
- Preserve consistency with the original semantic concept.

\vspace{4pt}

\textcolor{green!70!black}{\textbf{[SEMANTIC]}}\\
- Avoid directly mentioning the target negation concept.\\
- Preserve the original semantic.\\
- Focus on visually similar, clean, empty, minimal, or uncluttered scenes.\\
- Do not explicitly use negation expressions such as ``without'', ``no'', ``missing'', or ``lacks''.

\vspace{6pt}
\vspace{6pt}

\vspace{6pt}
{\bfseries \textcolor{darkgray}{\underline{Example Output}}}\\
\vspace{2pt}

\textcolor{orange!85!black}{\textbf{[BIASED]}}\\
- \textcolor{orange!85!black}{A bakery filled with freshly baked bread on display shelves.}\\
- \textcolor{orange!85!black}{A bakery counter stacked with warm loaves of bread.}\\
- \textcolor{orange!85!black}{Fresh bread arranged inside a bakery display case.}

\vspace{4pt}

\textcolor{green!70!black}{\textbf{[SEMANTIC]}}\\
- \textcolor{green!70!black}{A clean bakery interior with organized display shelves.}\\
- \textcolor{green!70!black}{A quiet bakery shop with minimal interior structure.}\\
- \textcolor{green!70!black}{A bakery counter with neatly arranged empty display racks.}

\end{tcolorbox}
\caption{LLM Instructions for Auxiliary Prompt Construction}
\label{figure_instruction}
\vspace{-8pt}
\end{figure}

\vfill\break

\subsection{Bidirectional Projection Algorithm}
\label{section_D_2}
Given a negation prompt $p_n$, the proposed method adaptively refines the text embedding by geometrically disentangling the pure semantics and residual styles from implicit biases, as summarized in Alg.~\ref{algorithm_2}.

%%%%%%%%%%%%%%%%%%%%%%%%%%%%%%%%%%%%%%%%%%%%%%%%%%%%%%%%%%%%%%%%%%%%%%%%%%%%%%%%%%%%%%%%%%%%%%%%%%%%%%%%%%%%%%%%%%%%%%%%%%%%
\begin{algorithm}[!h]
\caption{Negation-aware Text Embedding via Bi-SVD}
\label{algorithm_2}
%\normalsize
\small
\begin{algorithmic}[1]
\Require Negation prompt $p_n$ (embedding $x_n$), auxiliary embeddings $X_S, X_B \in \mathbb{R}^{N \times d}$
\Ensure Negation-aware embedding $\hat{x}_n$
\vspace{0pt}
\Statex \textbf{// Step 1: Subspace Estimation via Mean-Centered SVD}
\State Calculate structural means: 
\[
\mu_{\mathcal{S}} = \mathrm{Mean}(X_S), \qquad \mu_{\mathcal{B}} = \mathrm{Mean}(X_B)
\]
\State Perform SVD on mean-centered matrices (Eq.~\ref{equation_1}):
\[
\begin{aligned}
X_S - \mu_{\mathcal{S}}
&= U_S \Sigma_S V_S^\top, \\
X_B - \mu_{\mathcal{B}}
&= U_B \Sigma_B V_B^\top.
\end{aligned}
\]

% \State Perform SVD on mean-centered matrices (Eq.~\ref{equation_1}):
% %%%%%%%%%%%%%%%% 여기같은 곳이 좀 줄을 나가버리네? 이런거 처리할 수 있어? 
% \[
% X_S - \mu_{\mathcal{S}} = U_S \Sigma_S V_S^\top, \qquad X_B - \mu_{\mathcal{B}} = U_B \Sigma_B V_B^\top
% \]
\State Define affine projection operators with top-$k$ bases $V_S^{(k)}$ and $V_B^{(k)}$:
\[
\mathcal{P}_{\mathcal{S}}(x) = \mu_{\mathcal{S}} + V_S^{(k)} {V_S^{(k)}}^\top (x - \mu_{\mathcal{S}})
\]
\[
\mathcal{P}_{\mathcal{B}}(x) = \mu_{\mathcal{B}} + V_B^{(k)} {V_B^{(k)}}^\top (x - \mu_{\mathcal{B}})
\]
\vspace{1pt}
\Statex \textbf{// Step 2: Bidirectional Projection \& Residual Bias Removal}
\State Extract the bias-reduced semantic representation (Eq.~\ref{equation_2}):
\[
\hat{x}_n^{\mathrm{sem}} = (\mathbb{I} - \mathcal{P}_{\mathcal{B}})\mathcal{P}_{\mathcal{S}}(x_n)
\]
%\vspace{1pt}
\Statex \textbf{// Step 3: Residual Style Restoration \& Final Recomposition}
\State Restore residual style details onto $\mathcal{S}$ (Eq.~\ref{equation_3}):
\[
\hat{x}_n^{\mathrm{style}} = \mathcal{P}_{\mathcal{S}}(\mathbb{I} - \mathcal{P}_{\mathcal{B}})(x_n)
\]
\State Recompose the final negation-aware text embedding (Eq.~\ref{equation_4}):
\[
\hat{x}_n = \hat{x}_n^{\mathrm{sem}} + \hat{x}_n^{\mathrm{style}}
\]
\State \Return $\hat{x}_n$
\end{algorithmic}
\end{algorithm}
%%%%%%%%%%%%%%%%%%%%%%%%%%%%%%%%%%%%%%%%%%%%%%%%%%%%%%%%%%%%%%%%%%%%%%%%%%%%%%%%%%%%%%%%%%%%%%%%%%%%%%%%%%%%%%%%%%%%%%%%%%%%

\vfill\break

%%%%%%%%%%%%%%%%%%%%%%%%%%%%%%%%%%%%%%%%%%%%%%%%%%%%%%%%%%%%%%%%%%%%%%%%%%%%%%%%%%%%%%%%%%%%%%%%%%%%%%%%%%%%%%%%%%%%%%%%%%%%%%%%%%%%%%%%%%%%%%%%%%%%%%%%%%%%%%%%%%%%%%%%%%%%%%%%%%%%%%%%%%%%%%%%%%%%%%%%%%%%%%%%%%%%%%%%%%%%%%%%%%%%%%%%%%%%%%%%%%%%%%%%%%%%%%%%%%%%%%%%%%%%%%%%%%%%%%%%%%%%%%%%%%%%%%%%%%%%%%%%%%%%%%%%%%%%%%%%%%%%%%%

\clearpage
\section{Experimental Details}
\label{section_F}

\begin{figure}[!t]
\centering
\vspace{-5pt}
\includegraphics[width=\linewidth]{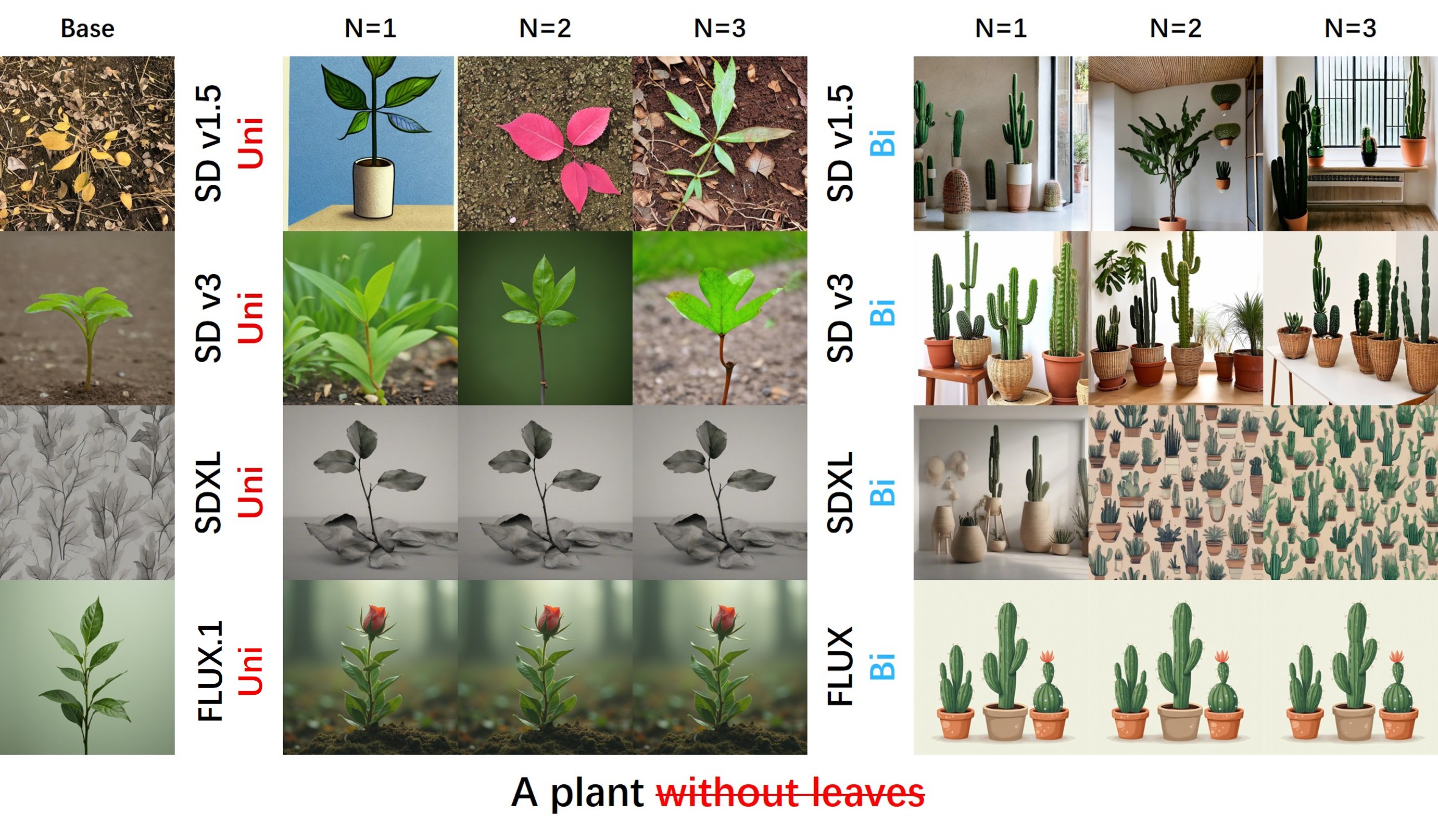}
\caption{Qualitative comparison of Base, Uni-SVD, and Bi-SVD under varying numbers of auxiliary prompts.}
\label{figure_11}
\end{figure}

\subsection{Experimental Setup}
\label{section_F_exp}
All experiments were conducted on an NVIDIA A100 GPU. To ensure a fair comparison, the number of inference steps was kept identical to the default settings of each corresponding base model, and a consistent random seed was used across all methods. Furthermore, all models utilized in this study were used in strict compliance with their respective licenses.

\subsection{Uni-SVD vs Bi-SVD}
\label{section_F_1}

Fig.~\ref{figure_11} provides additional qualitative results supporting the analysis in \S~\ref{section_4_3} regarding the effectiveness of Bi-SVD. U-Net based models such as SD v1.4 and SD v1.5 exhibit unstable removal of target negation semantics when \textbf{Uni-SVD} is applied. As discussed in prior work such as SupEOT, the generation process of diffusion models is influenced not only by text embeddings but also by multiple factors including padding tokens and other latent interactions. Consequently, controlling only a single semantic direction through \textbf{Uni-SVD} becomes insufficient for resolving such complex semantic entanglement in a stable manner.

In contrast, under FLUX.1, \textbf{Uni-SVD} gradually estimates more meaningful semantic directions as the number of auxiliary prompts increases, resulting in improved semantic disentanglement performance. This behavior becomes particularly evident in FLUX.1 because, as reported in previous studies~\cite{dalva2024fluxspace}, the embedding representations of FLUX.1 exhibit relatively disentangled properties compared to other diffusion models, leading to more clearly separated semantic structures. As a result, dominant semantic components can be estimated more reliably even through simple unidirectional semantic decomposition, allowing \textbf{Uni-SVD} to achieve a certain level of semantic manipulation performance.

However, for another Transformer based architecture, SD v3, \textbf{Uni-SVD} still fails to reliably preserve negation behavior despite its stronger text understanding capability. This observation suggests that the effectiveness of \textbf{Uni-SVD} is not determined solely by the backbone architecture itself. Unless the underlying embedding representations already possess disentangled semantic structures similar to FLUX.1, \textbf{Uni-SVD} remains limited in handling strongly entangled semantic scenarios.

In contrast, the proposed \textbf{Bi-SVD} forms a more robust semantic subspace as the number of auxiliary prompts increases, allowing semantic directions to be estimated more reliably while simultaneously separating semantic and stylistic information through bidirectional decomposition. These results suggest that simple unidirectional suppression strategies are difficult to generalize across diverse diffusion backbones, whereas the proposed bidirectional decomposition framework provides a more robust and generalizable solution for negation-aware generation.

\subsection{Concept Negation}
\label{section_F_2}

\begin{figure}[t]
\centering
\vspace{-5pt}
\includegraphics[width=\linewidth]{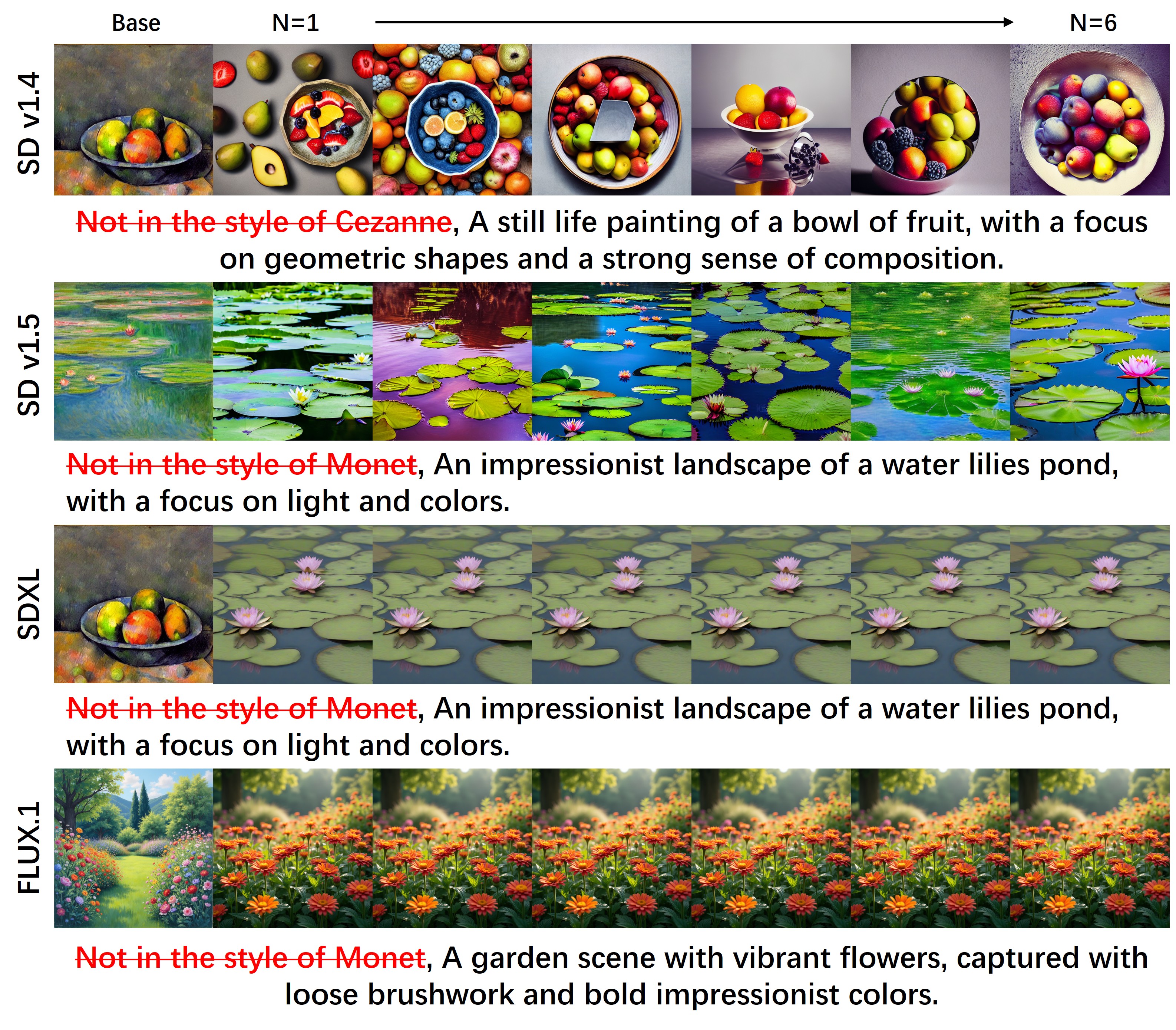}
\caption{Qualitative results of concept negation with varying numbers of auxiliary prompts.}
\label{figure_12}
\end{figure}

For concept-level negation, \methodname{} uses only a single auxiliary semantic prompt ($A_S$) and a single manually defined biased prompt ($A_B$) by default. Unlike object-level negation, where target semantics are often strongly entangled with surrounding contextual information, concept-level semantics are generally more directly associated with the target concept itself. As a result, the dependency on multiple auxiliary prompts becomes substantially lower, and stable semantic removal can already be achieved using a single auxiliary prompt pair across different diffusion backbones. Additional qualitative comparisons are provided in Fig.~\ref{figure_12}.

\subsection{Extraction Method}
\label{section_F_3}

\begin{figure}[!t]
\centering
\includegraphics[width=\linewidth]{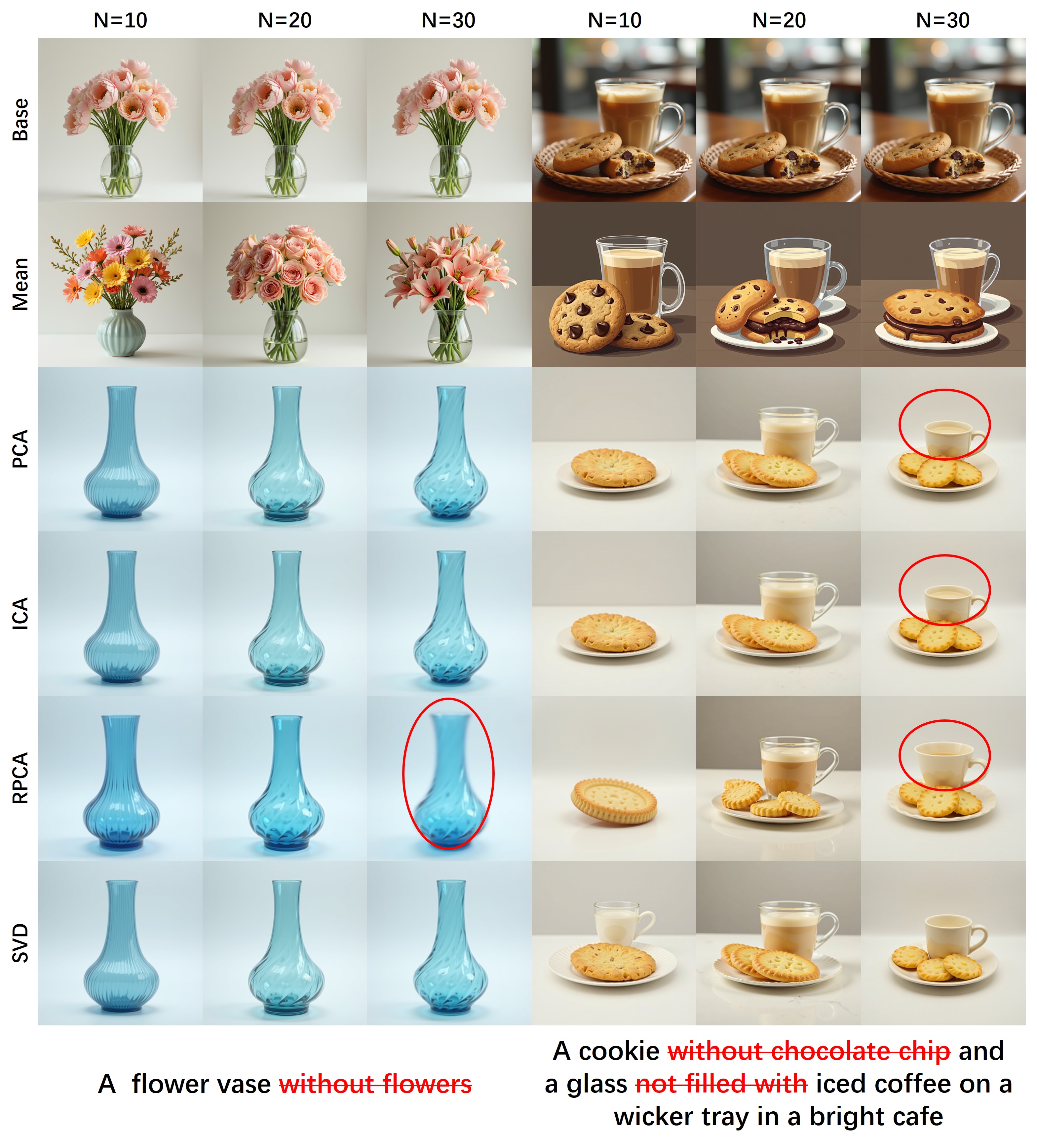}
\caption{Qualitative comparisons of different component decomposition methods under single and multi-negation settings.}
\label{figure_13}
\end{figure}

Concept negation evaluates whether concept-specific characteristics remain despite explicit negation prompts; for example, given a prompt $p_n$ such as ``a garden not in the style of Monet'', we assess whether Monet-style characteristics are successfully removed from generated images using artistic concepts.

Fig.~\ref{figure_13} provides additional qualitative results for the analysis in \S~\ref{section_4_3} (Extraction Method). To investigate the effect of component extraction methods for semantic basis decomposition in the prompt embedding space, we compare Mean, Principal Component Analysis (\textbf{PCA}), Independent Component Analysis (\textbf{ICA}), Robust Principal Component Analysis (\textbf{RPCA}), and Singular Value Decomposition (\textbf{SVD}) under both single-negation and multi-negation settings.

\textbf{Mean} exhibits the weakest performance because simple averaging fails to effectively suppress shared biased semantics, resulting in residual bias in generated images.

\textbf{RPCA} exhibits the most unstable behavior. \textbf{RPCA} assumes that the input representation can be decomposed into low-rank structures and sparse residual components. However, auxiliary prompts often contain meaningful semantic variations rather than sparse noise. Consequently, some semantic information may be incorrectly separated into sparse residual components, distorting the underlying semantic structure of the embedding space and degrading both semantic consistency and image quality. As shown on the left side of Fig.~\ref{figure_13}, this effect becomes more noticeable as the number of auxiliary prompts increases, leading to increasingly blurry generated results.

While \textbf{PCA} and \textbf{ICA} are highly effective at capturing simple, dominant semantic structures, they fail to maintain semantic consistency in complex, multi-negation scenarios. In single-negation settings, \textbf{PCA} extracts dominant semantic directions through variance maximization, whereas \textbf{ICA} aims to separate statistically independent latent components; both methods perform reliably because the underlying semantic patterns are straightforward. However, in multi-negation scenarios, multiple semantic attributes become simultaneously entangled and interact with each other within the text representation. For instance, under complex prompts such as ``\textit{not filled with iced coffee},'' both methods struggle to consistently satisfy multiple semantic conditions. This demonstrates that relying solely on dominant directions or independent components is insufficient for representing intertwined semantic factors, ultimately leading to a degradation of semantic consistency.

\textbf{SVD} consistently demonstrates the most robust behavior across different settings. Unlike \textbf{PCA}, which tends to be sensitive to dominant, high-variance semantic factors, \textbf{SVD} estimates the principal semantic structures repeatedly shared across auxiliary prompts in a much more stable manner. Given that multiple entangled semantic directions coexist in multi-negation scenarios, the stable estimation of these shared structures becomes increasingly critical. Consequently, \textbf{SVD} achieves more reliable semantic disentanglement and better preserves semantic consistency across diverse and complex conditions.

%%%%%%%%%%%%%%%%%%%%%%%%%%%%%%%%%%%%%%%%%%%%%%%%%%%%%%%%%%%%%%%%%%%%%%%%%%%%%%%%%%%%%%%%%%%%%%%%%%%%%%%%%%%%%%%%%%%%%%%%%%%%%%%%%%%%%%%%%%%%%%%%%%%%%%%%%%%%%%%%%%%%%%%%%%%%%%%%%%%%%%%%%%%%%%%%%%%%%%%%%%%%%%%%%%%%%%%%%%%%%%%%%%%%%%%%%%%%%%%%

\subsection{SVD Component}
\label{section_F_4}

\begin{figure}[t]
\centering
\includegraphics[width=\linewidth]{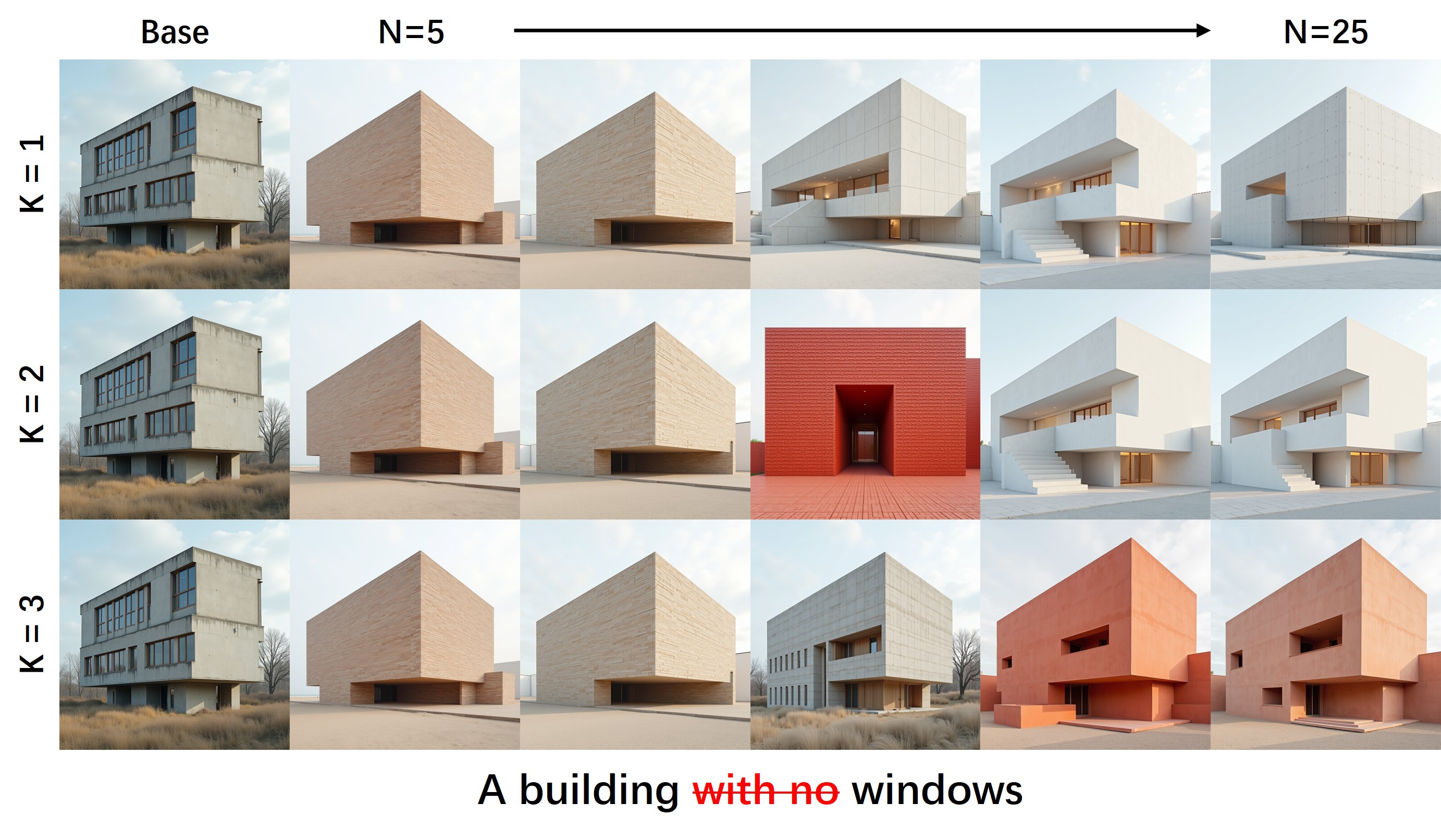}
\caption{Qualitative comparisons with different numbers of SVD components ($k$).}
\label{figure_14}
\end{figure}

Fig.~\ref{figure_14} provides additional qualitative results for the analysis in \S~\ref{section_4_3} (Effect of SVD Component). The results show that using only $k=1$ is sufficient to capture the dominant semantic direction shared across auxiliary prompts, enabling stable semantic disentanglement. In contrast, including additional components tends to introduce finer semantic variations, contextual information, and residual stylistic representations. Since these components are not always directly related to negation semantics, they may distort the original semantic structure or weaken negation performance. Therefore, we use only the first singular component in all experiments.

%\vfill\break
%\clearpage

\begin{figure}[t]
\centering
% \vspace{-5pt}
\vspace{-5pt}
\includegraphics[width=\linewidth]{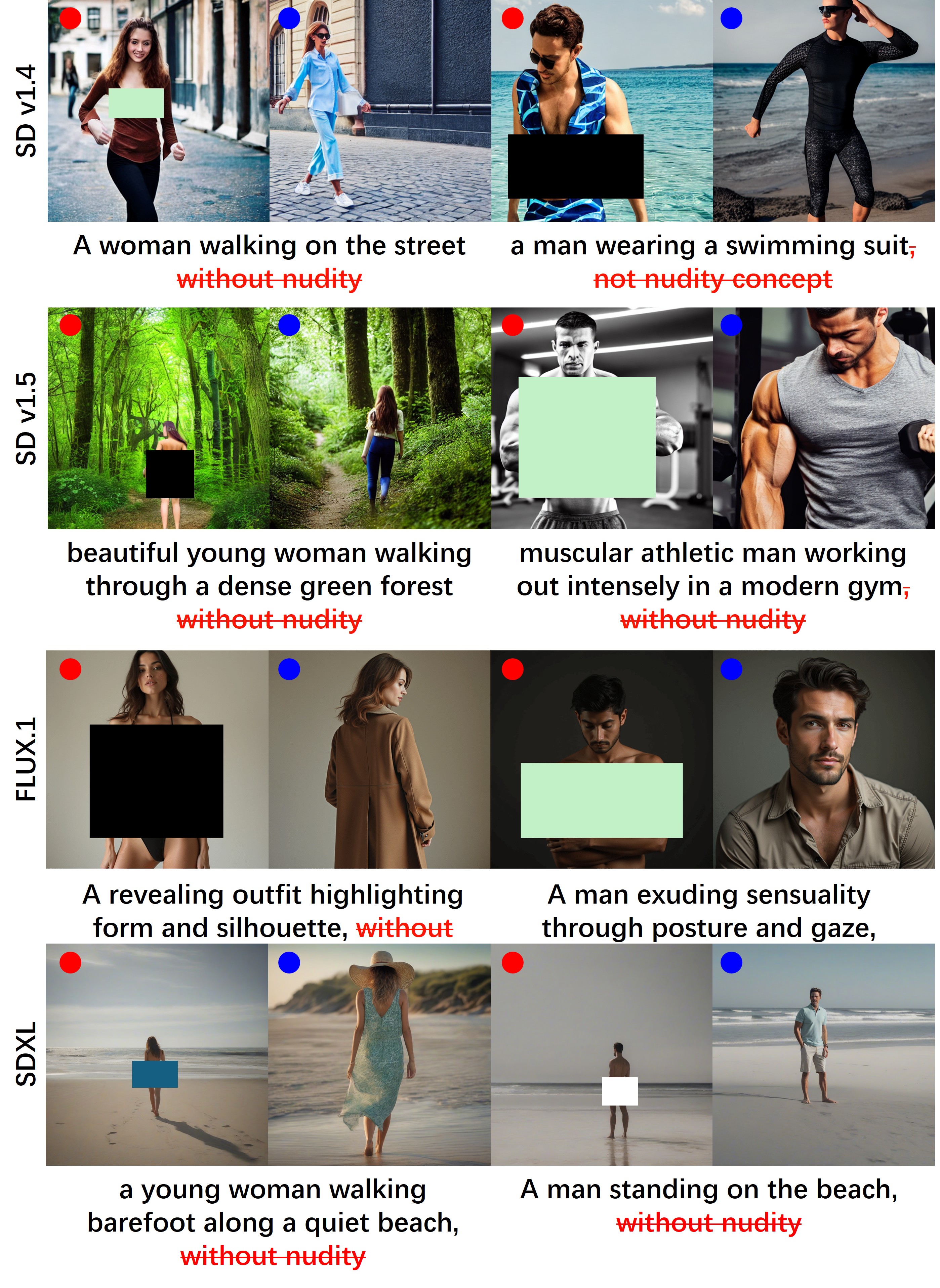}
\caption{Qualitative comparisons for nude concept negation across SD v1.4, SD v1.5, FLUX.1, and SDXL. Regions highlighted with \textcolor{red}{red} markers indicate failure cases in the base model, whereas regions highlighted with \textcolor{blue}{blue} markers correspond to improvements achieved by \methodname{}.}
\label{figure_15}
\end{figure}

\section{Additional Experiment}
\label{section_G}

\subsection{Nude Concept Negation}
\label{section_G_1}
** As this section involves sensitive visual concepts, the proposed framework is intended solely for research purposes to evaluate semantic disentanglement and negation understanding, rather than for generating or modifying inappropriate content. **

To evaluate the generalization capability of \methodname{}, we conduct additional nude concept negation experiments using a dataset adapted from~\cite{gong2024reliable} across SD v1.4, SD v1.5, FLUX.1, and SDXL. Similar to artist concept negation, where strong semantic associations are induced by artist identities, nude concept negation also exhibits strong concept-specific priors embedded within pretrained text representations. Consequently, diffusion models often retain nudity-related semantics even under explicit negation prompts such as \textit{without nudity}.

To examine whether generated images correctly follow prompts containing explicit negation expressions, we apply \methodname{} and present qualitative results in Fig.~\ref{figure_15}. Compared to the base models, \methodname{} demonstrates substantially improved adherence to the input prompts.

Specifically, \methodname{} consistently suppresses nudity-related biased semantics while preserving clothing structure, human posture, and overall visual realism across different diffusion backbones. These results suggest that the proposed Bi-SVD based decomposition effectively generalizes beyond artist-style negation toward broader semantic concept negation scenarios.

\subsection{Additional Application}
\label{section_G_2}
\methodname{} is additionally evaluated on FLUX.1-schnell~\cite{flux2024}, a distilled version of FLUX.1. The results show that \methodname{} can also be effectively applied to distilled diffusion models. However, due to compressed semantic representations introduced by the distillation process, semantic subspace estimation becomes relatively unstable when only a limited number of auxiliary prompts are available. To compensate for this limitation, we use 12 auxiliary prompts in our experiments. Qualitative examples are in Fig.~\ref{figure_21}.

%%%%%%%%%%%%%%%%%%%%%%%%%%%%%%%%%%%%%%%%%%%%%%%%%%%%%%%%%%%%%%%%%%%%%%%%%%%%%%%%%%%%%%%%%%%%%%%%%%%%%%%%%%%%%%
% \vfill\break

\section{Computational Cost}
\label{section_H}
To compare computational efficiency across different backbones with varying image resolutions and generation settings, we report inference latency normalized by the corresponding base model.
As shown in Tab.~\ref{table_8}, optimization-based methods introduce substantial computational costs. DATE~\cite{na2025diffusion} and SupEOT~\cite{li2023get} significantly increase inference time, reaching $13.07\times$ and $6.64\times$ relative latency, respectively. Model editing approaches such as MACE~\cite{lu2024mace}, SPEED~\cite{li2025speed}, and UCE~\cite{gandikota2024unified} also incur considerable overhead due to parameter manipulation or additional optimization procedures. In contrast, \methodname{} performs only a lightweight SVD operation in the text embedding space, introducing negligible computational overhead. Across diverse backbone architectures, it consistently requires only $1.004\times$ to $1.029\times$ relative latency compared to the corresponding base models.

\begin{table}[h]
\centering
\caption{Comparison of inference latency based on the different diffusion backbones.}
\label{table_8}

\scriptsize
\setlength{\tabcolsep}{2pt}
\renewcommand{\arraystretch}{0.95}
\resizebox{\columnwidth}{!}{
\begin{tabular}{c c c c c}

\toprule

\multirow{2}{*}{\textbf{Arch}}
& \multirow{2}{*}{\textbf{Base}}
& \multirow{2}{*}{\textbf{Model}}
& \multicolumn{2}{c}{\textbf{Latency}} \\

\cmidrule(lr){4-5}

&&& \textbf{Time(s)} & \textbf{Relative} \\

\midrule

\multirow{4}{*}{SD v1.4}
& \multirow{4}{*}{U-Net}
& Base & 2.001 & -\\
&& MACE & 4.389 & 2.194$\times$\\
&& SPEED & 4.534 & 2.266$\times$\\
&& Ours & \textbf{2.060} & \textbf{1.029$\times$}\\
\midrule
\multirow{5}{*}{SD v1.5}
& \multirow{5}{*}{U-Net}
& Base & 2.124 & -\\
&& DATE & 27.761 & 13.070$\times$\\
&& SupEOT & 14.103 & 6.640$\times$\\
&& RECE & 2.795 & 1.316$\times$\\
&& Ours & \textbf{2.060} & \textbf{1.029$\times$}\\

\midrule

\multirow{3}{*}{SD v3}
& \multirow{3}{*}{DiT}
& Base & 4.947 & -\\
&& SoftREPA & 5.343 & 1.080$\times$\\
&& Ours & \textbf{4.968} & \textbf{1.004$\times$}\\

\midrule

\multirow{2}{*}{SDXL}
& \multirow{2}{*}{DiT}
& Base & 5.991 & -\\
&& Ours & \textbf{6.052} & \textbf{1.010$\times$}\\

\midrule

\multirow{3}{*}{FLUX.1}
& \multirow{3}{*}{DiT}
& Base & 99.067 & -\\
&& UCE & 199.026 & 2.009$\times$\\
&& Ours & \textbf{101.356} & \textbf{1.023$\times$}\\

\midrule

\multirow{3}{*}{FLUX.1-schnell}
& \multirow{3}{*}{DiT}
& Base & 78.729 & -\\
&& NAG & 80.540 & 1.023$\times$\\
&& Ours & \textbf{80.382} & \textbf{1.021$\times$}\\

\bottomrule
\end{tabular}}\end{table}

%\vfill\break
\section{User Study Details} \label{section_I}
We conduct a human evaluation of object- and concept-level negation using FLUX.1, which achieved the best overall performance in the main experiments.
For object negation, we compare the FLUX.1 base model, NAG, and \methodname{}, while for concept negation, we compare the FLUX.1 base model, Erasing, and \methodname{}.
A total of 32 volunteers participated without financial compensation.
All participants provided informed consent prior to the study and were informed that their responses would be used solely for research purposes.
The sample size was determined through a power analysis following Cohen~\cite{cohen1960coefficient}, assuming a large effect size ($d=0.8$), power of $0.8$, and a significance level of $\alpha=0.05$.
Each participant evaluated 12 randomly sampled prompts for each negation type and ranked the three generated images based on \textit{Prompt Alignment} and \textit{Visual Quality}, measuring adherence to the prompt including the specified negation and overall perceptual quality, respectively.
The rankings were converted into Borda scores of $1.0$, $0.5$, and $0.0$ for first, second, and third place, respectively, accumulated across all participants and prompts, and normalized by the maximum possible score.
The resulting preference scores are reported in Tab.~\ref{table_4}.
Wilcoxon signed-rank tests on participant-level responses show that \methodname{} significantly outperforms the competing methods across all four criteria: Object Alignment ($p=3.6\times10^{-7}$), Object Visual Quality ($p=3.8\times10^{-7}$), Concept Alignment ($p=3.7\times10^{-7}$), and Concept Visual Quality ($p=3.8\times10^{-7}$), with all differences significant at $p<0.001$.
The evaluation instructions and example questions are provided in Fig.~\ref{figure_16}.

\section{Additional Qualitative Results}
\label{section_J}
Additional qualitative results are presented from Fig.~\ref{figure_17} to Fig.~\ref{figure_22}. Each image set compares the generation results of the base model and the proposed method (\methodname{}). Fig.~\ref{figure_17} and Fig.~\ref{figure_18} present results on U-Net based diffusion models, including SD v1.4 and SD v1.5. Fig.~\ref{figure_19} to Fig.~\ref{figure_22} present results on DiT based diffusion models, including SD v3, SDXL, FLUX.1-schnell, and FLUX.1.

\section{Use of AI Assistants}
We utilized LLMs (e.g., ChatGPT, Gemini) to support manuscript writing 
refinement, grammatical editing, and code debugging during the research 
process. All research claims, interpretations, experimental design, 
implementation details, and scientific content remain the sole 
responsibility of the authors.

\clearpage
\begin{figure*}[!htb]
     \centering
     \includegraphics[width=.8\textwidth]{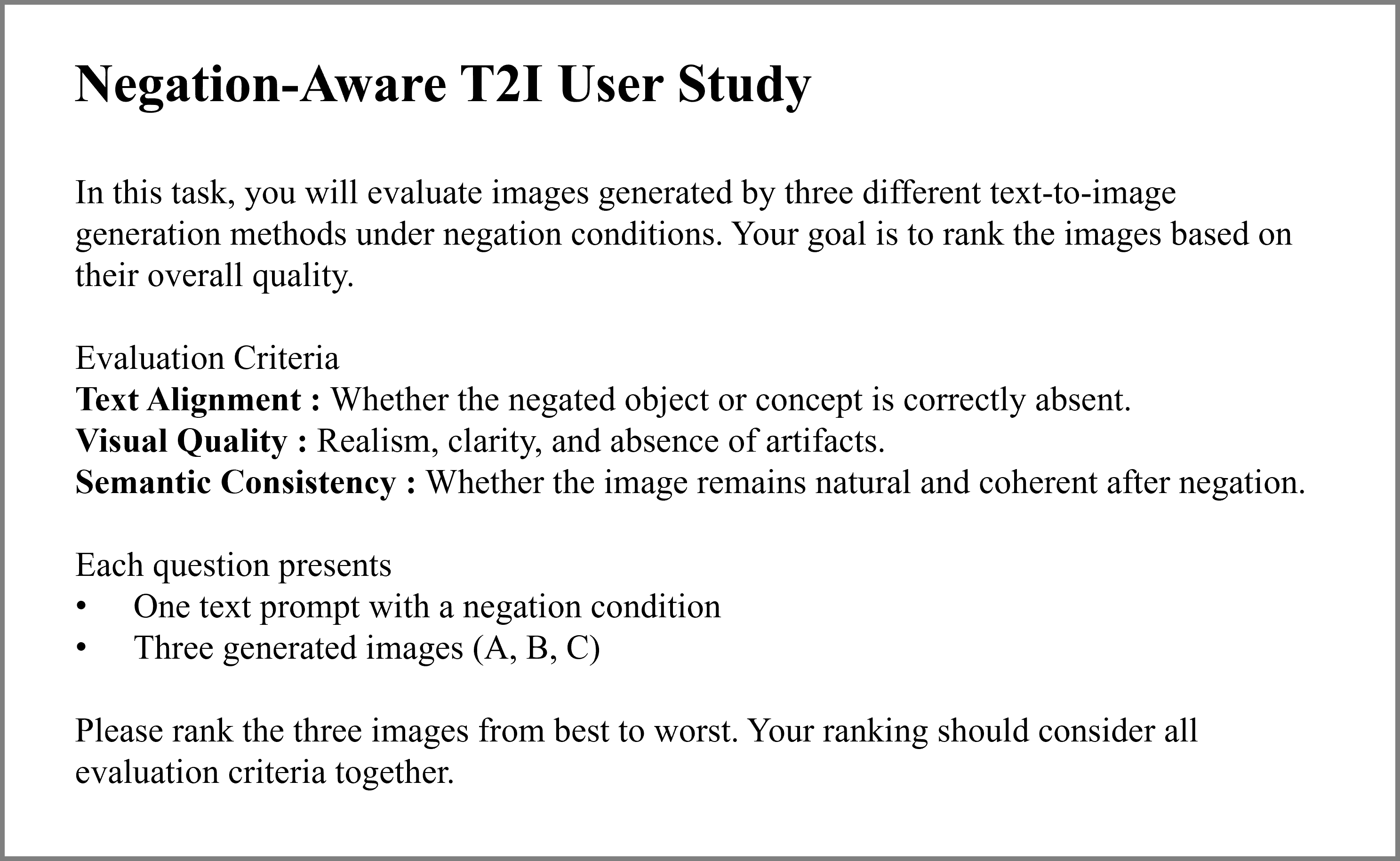}
     \vspace{18mm}
     \includegraphics[width=.8\textwidth]{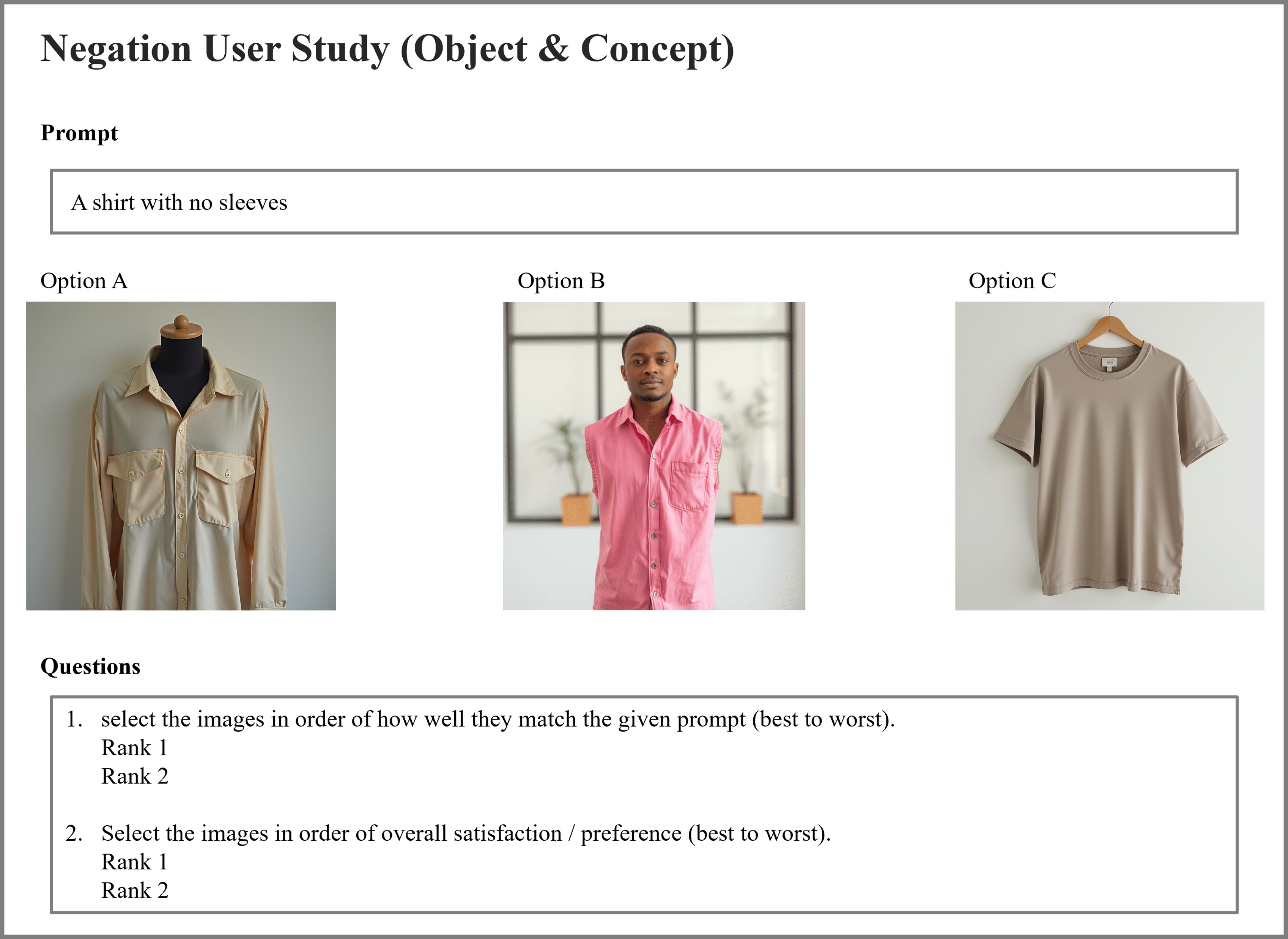}
     \caption{Example of the negation-aware user study. Participants compare generated images from three methods and evaluate negation fidelity and overall visual quality.}
     \label{figure_16}
\end{figure*}

\begin{figure*}
    \centering
    \includegraphics[width=\textwidth]{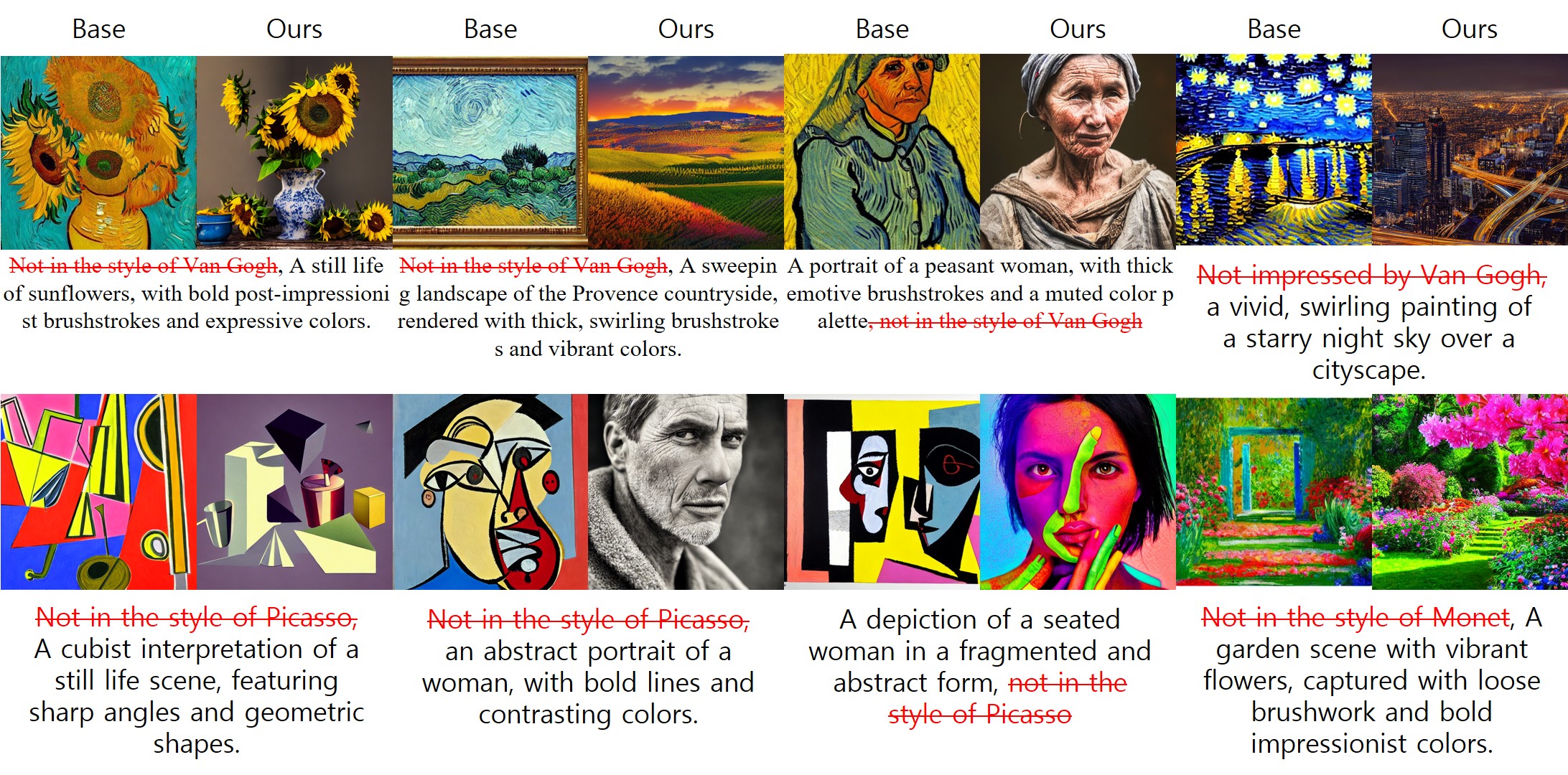}
    \caption{Qualitative examples on SD v1.4 on concept negation.}
    \label{figure_17}
\end{figure*}

\begin{figure*}
    \centering
    \includegraphics[width=\textwidth]{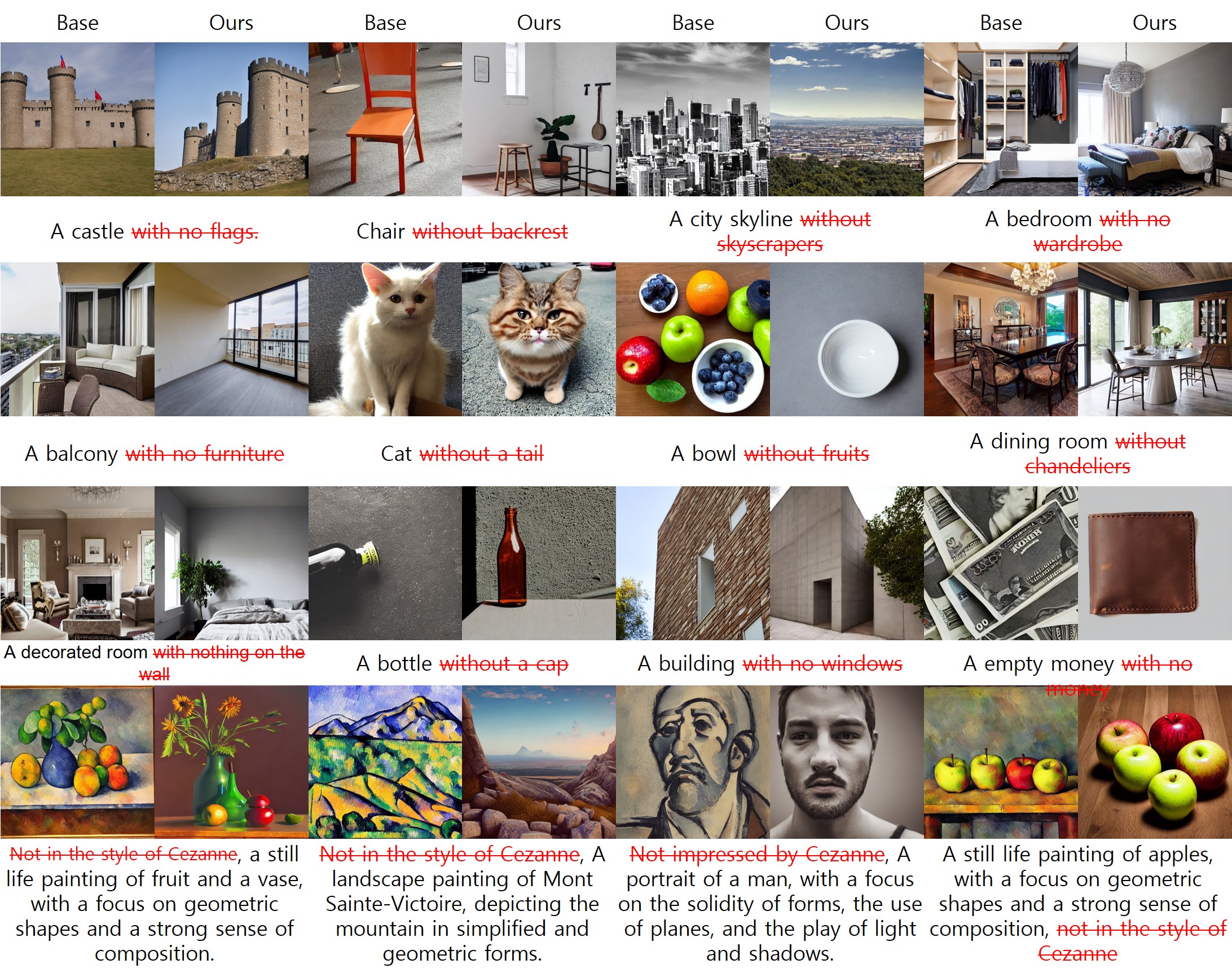}
    \caption{Qualitative results on SD v1.5 for object and concept negation.}
    \label{figure_18}
\end{figure*}

\begin{figure*}
    \centering
    \includegraphics[width=\textwidth]{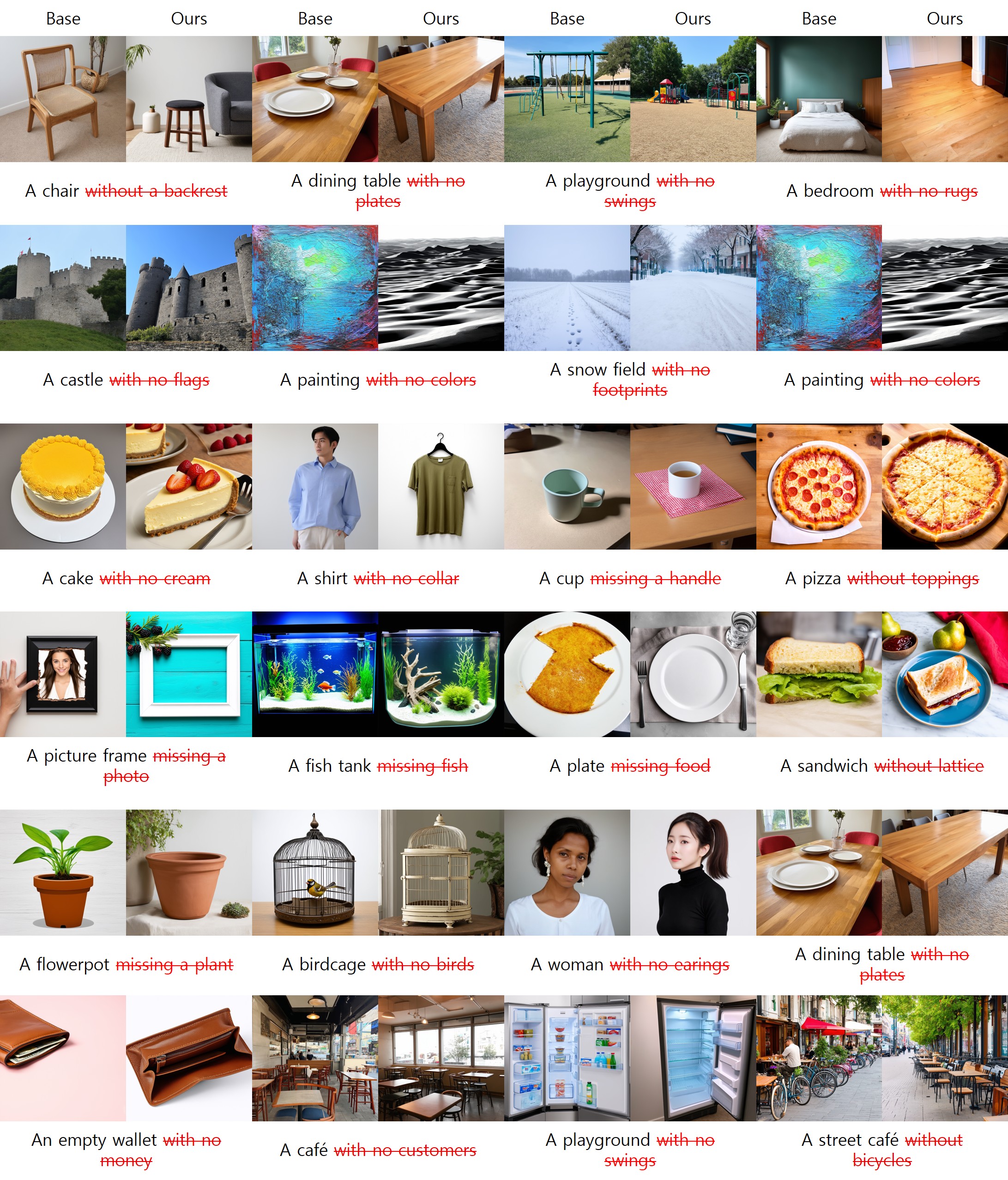}
    \caption{Object negation examples on SD v3.}
    \label{figure_19}
\end{figure*}

\begin{figure*}
    \centering
    \includegraphics[width=\textwidth]{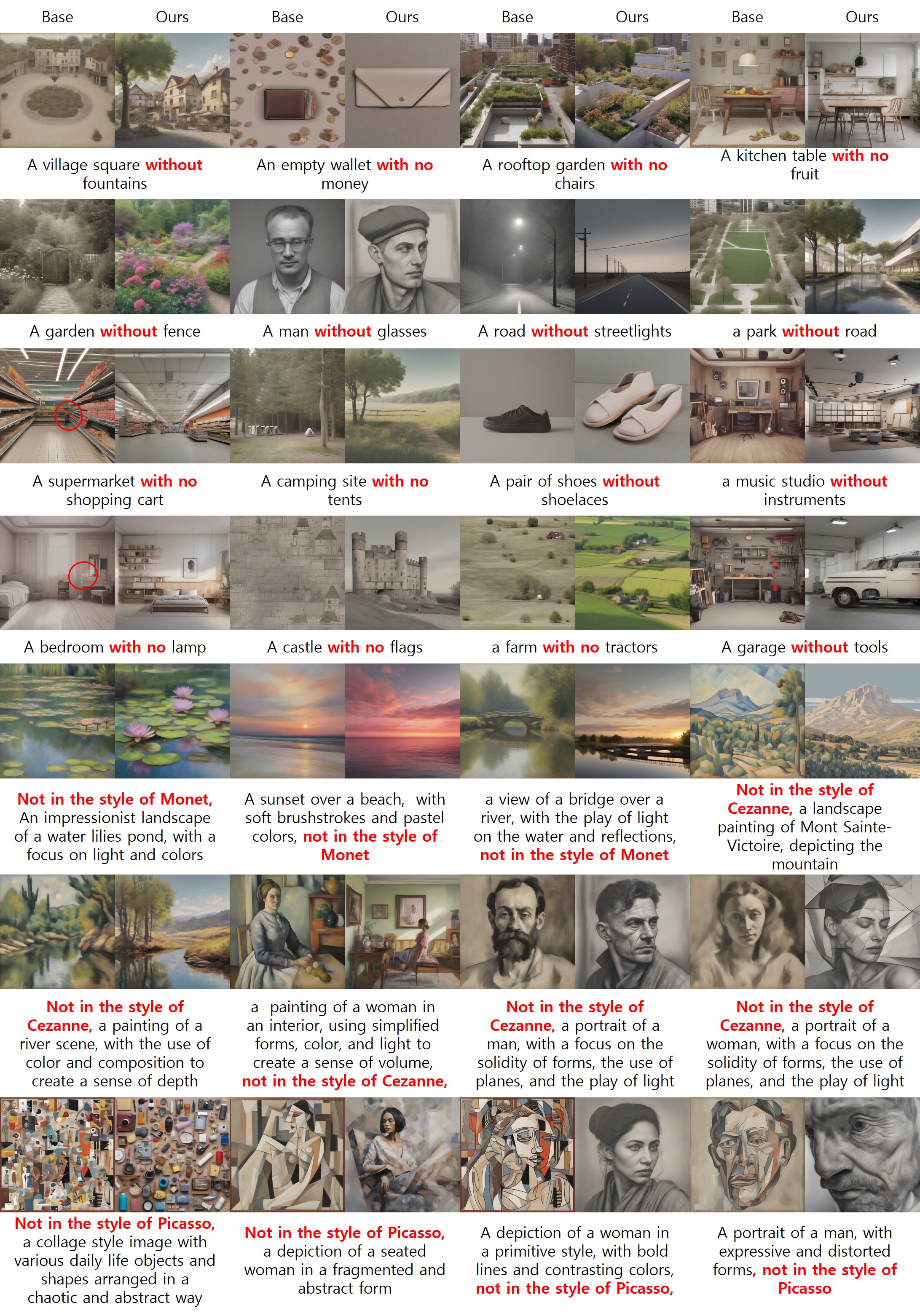}
    \caption{Object and artistic concept negation results on SDXL.}
    \label{figure_20}
\end{figure*}

\begin{figure*}
    \centering
    \includegraphics[width=\textwidth]{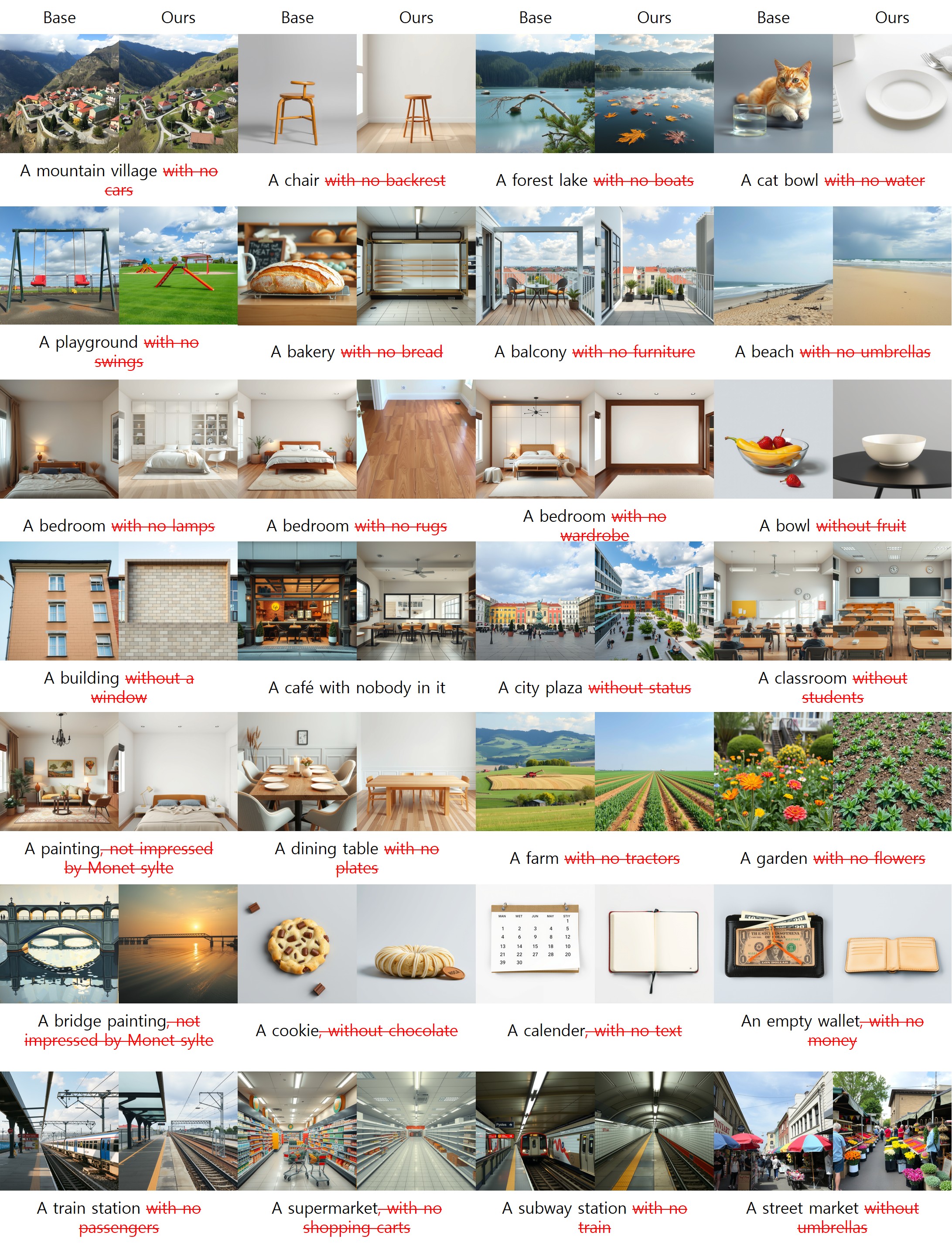}
    \caption{Object and concept negation results on FLUX.1-schnell.}
    \label{figure_21}
\end{figure*}

\begin{figure*}
    \centering
    \includegraphics[width=\textwidth]{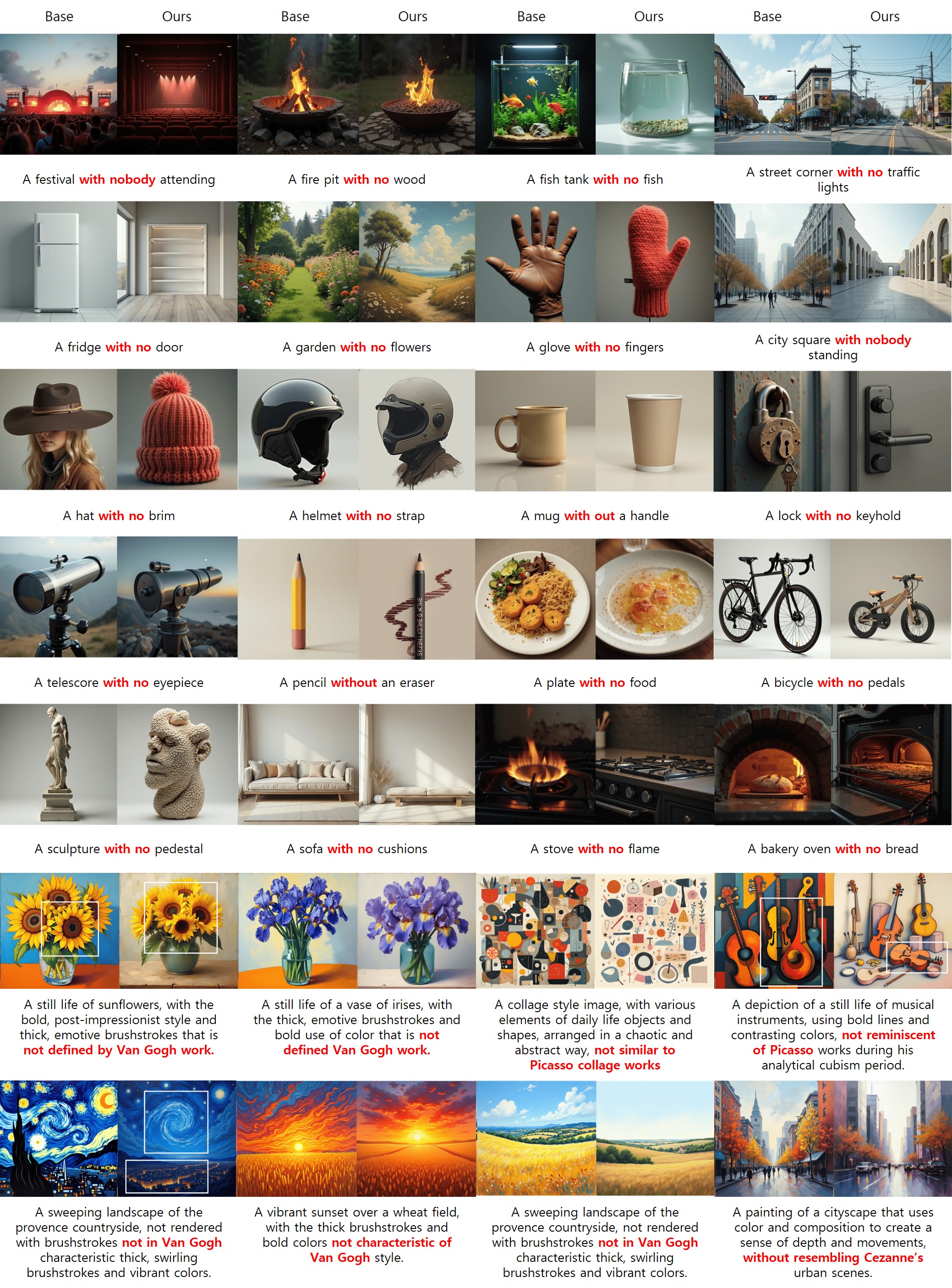}
    \caption{Object and artistic concept negation results on FLUX.1.}
    \label{figure_22}
\end{figure*}